\documentclass[10pt,letterpaper]{article}
\usepackage{thunder}

\usepackage[utf8]{inputenc} 
\usepackage[T1]{fontenc}    
\usepackage{hyperref}       
\usepackage[nameinlink]{cleveref} 
\usepackage{url}            
\usepackage{booktabs}       
\usepackage{amsfonts}       
\usepackage{nicefrac}       
\usepackage{microtype}      
\usepackage{xcolor}         
\usepackage[numbers,sort]{natbib}
\usepackage{amsthm}
\usepackage{amssymb}
\usepackage{booktabs}
\usepackage{multirow}
\usepackage{float}
\usepackage{makecell}
\newtheorem{theorem}{Theorem}

\newtheorem{lemma}{Lemma}

\usepackage{bm}
\usepackage{subfigure}
\usepackage{colortbl}

\usepackage{footmisc}  
\makeatletter
\renewcommand{\footnoterule}{%
  \kern -3pt 
  \hrule width \textwidth height 0.4pt
  \kern 2pt  
}
\makeatother
\makeatletter
\newcommand\blfootnote[1]{%
  \begingroup
  \renewcommand\thefootnote{}\footnote{#1}%
  \addtocounter{footnote}{-1}%
  \endgroup
}
\makeatother

\usepackage{lineno}

\definecolor{darkblue}{rgb}{0, 0, 0.5}
\definecolor{darkgreen}{RGB}{50,100,0}
\definecolor{darkred}{RGB}{200, 0, 0}
\definecolor{lightblue}{RGB}{220,235,250}
\usepackage[tikz]{bclogo}
\usepackage{enumitem}
\newenvironment{itemize*}%
 {\leftmargini=20pt\begin{itemize}%
  \setlength{\itemsep}{3pt}%
  \setlength{\parskip}{0pt}%
  }%
 {\end{itemize}}
\newenvironment{enumerate*}%
 {\begin{enumerate}%
  \setlength{\itemsep}{0pt}%
  \setlength{\parskip}{0pt}}%
 {\end{enumerate}}

\usepackage{caption}
\usepackage{listings}
\lstnewenvironment{normalcode}{
  \lstset{
    language=Python,
    basicstyle=\ttfamily\small,
    backgroundcolor=\color{white},
    breaklines=true,
    showstringspaces=false,
    frame=none,
    xleftmargin=1em,
    aboveskip=1pt,
    belowskip=1pt
  }
}{}

\lstnewenvironment{diffaddcode}{
  \lstset{
    language=Python,
    basicstyle=\ttfamily\small,
    backgroundcolor=\color{green!15},
    breaklines=true,
    showstringspaces=false,
    frame=none,
    xleftmargin=1em,
    aboveskip=1pt,
    belowskip=1pt
  }
}{}

\lstnewenvironment{diffdelcode}{
  \lstset{
    language=Python,
    basicstyle=\ttfamily\small,
    backgroundcolor=\color{red!15},
    breaklines=true,
    showstringspaces=false,
    frame=none,
    xleftmargin=1em,
    aboveskip=1pt,
    belowskip=1pt
  }
}{}
\DeclareCaptionType{listing}[Listing][List of Listings]
\usepackage[most,skins,theorems]{tcolorbox}

\newcolumntype{P}[1]{>{\centering\arraybackslash}p{#1}}

\title{A Step Back: Prefix Importance Ratio \\Stabilizes Policy Optimization}
\shorttitle{A Step Back: Prefix Importance Ratio Stabilizes Policy Optimization}

\author[1,\footnotesize{\textbf{*}}]{Shiye Lei}
\author[2,\footnotesize{$\dagger$}]{Zhihao Cheng}
\author[3,\footnotesize{$\dagger$}]{Dacheng Tao}
\affil[1]{The University of Sydney}
\affil[2]{ByteDance BandAI}
\affil[3]{Nanyang Technological University}

\date{}

\begin{document}
\maketitle
\blfootnote{\footnotesize{\textbf{*}} Work done during an internship at \href{https://bytedancebandai.notion.site/intro}{ByteDance BandAI}.}
\blfootnote{\footnotesize{$\dagger$} Corresponding authors: \href{mailto:zhihao.cheng@bytedance.com}{zhihao.cheng@bytedance.com},\ \href{mailto:dacheng.tao@ntu.edu.sg}{dacheng.tao@ntu.edu.sg}}

\begin{abstract}

Reinforcement learning (RL) post-training has increasingly demonstrated strong ability to elicit reasoning behaviors in large language models (LLMs). For training efficiency, rollouts are typically generated in an off-policy manner using an older sampling policy and then used to update the current target policy. To correct the resulting discrepancy between the sampling and target policies, most existing RL objectives rely on a token-level importance sampling ratio, primarily due to its computational simplicity and numerical stability. However, we observe that token-level correction often leads to unstable training dynamics when the degree of off-policyness is large. In this paper, we revisit LLM policy optimization under off-policy conditions and show that the theoretically rigorous correction term is the prefix importance ratio, and that relaxing it to a token-level approximation can induce instability in RL post-training. To stabilize LLM optimization under large off-policy drift, we propose a simple yet effective objective, {\it Minimum Prefix Ratio} (\textbf{MinPRO}). MinPRO replaces the unstable cumulative prefix ratio with a non-cumulative surrogate based on the minimum token-level ratio observed in the preceding prefix. Extensive experiments on both dense and mixture-of-experts LLMs, across multiple mathematical reasoning benchmarks, demonstrate that MinPRO substantially improves training stability and peak performance in off-policy regimes.

\end{abstract}

\begin{figure}[h]
\centering
\subfigure[An overview of MinPRO]{
\label{figure:sketch}
\includegraphics[width=0.51\columnwidth]{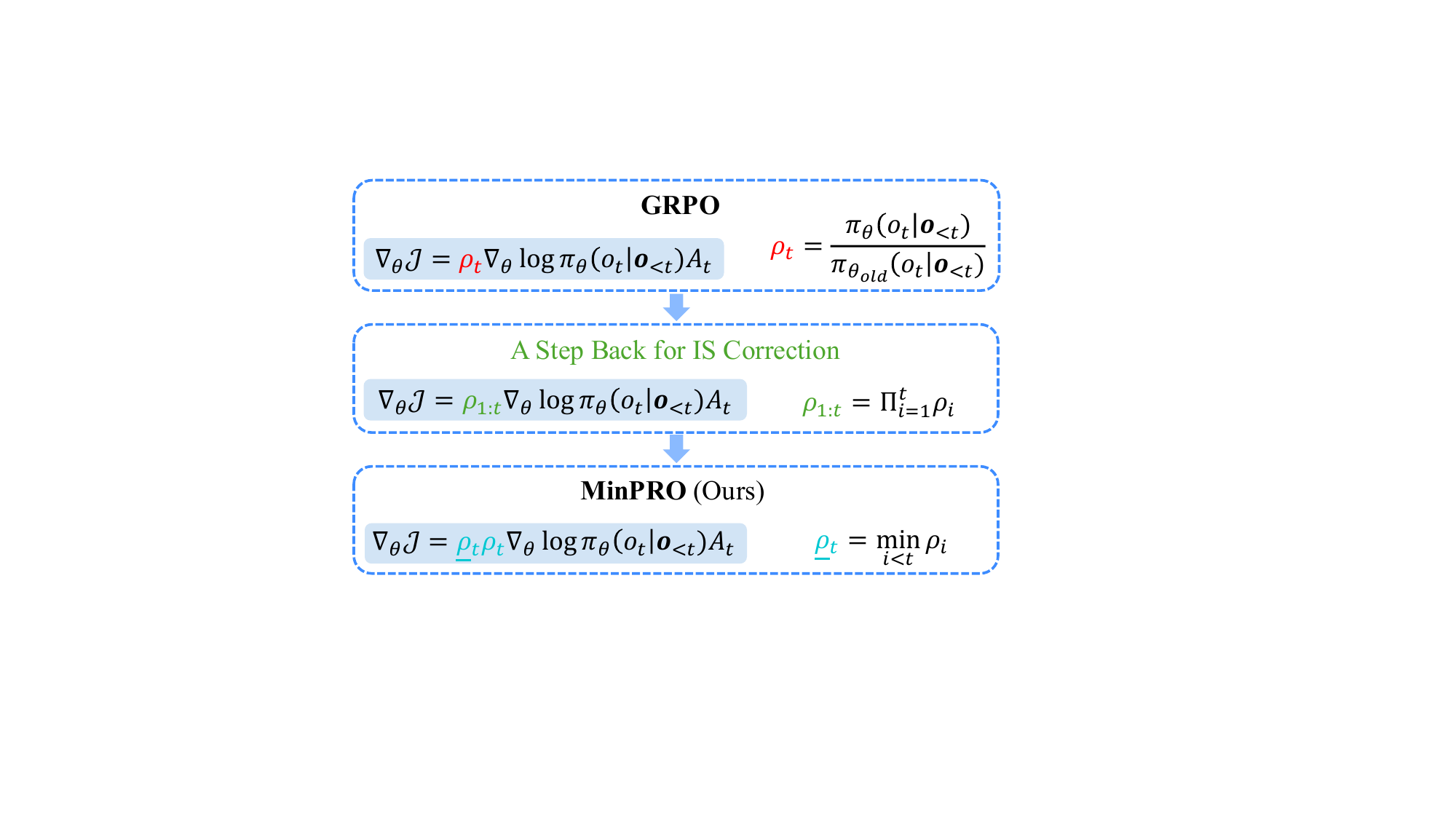}
}
\hfill
\subfigure[AIME24 scores under off-policy training]{
\label{figure:aime24-reward}
\includegraphics[width=0.44\columnwidth]{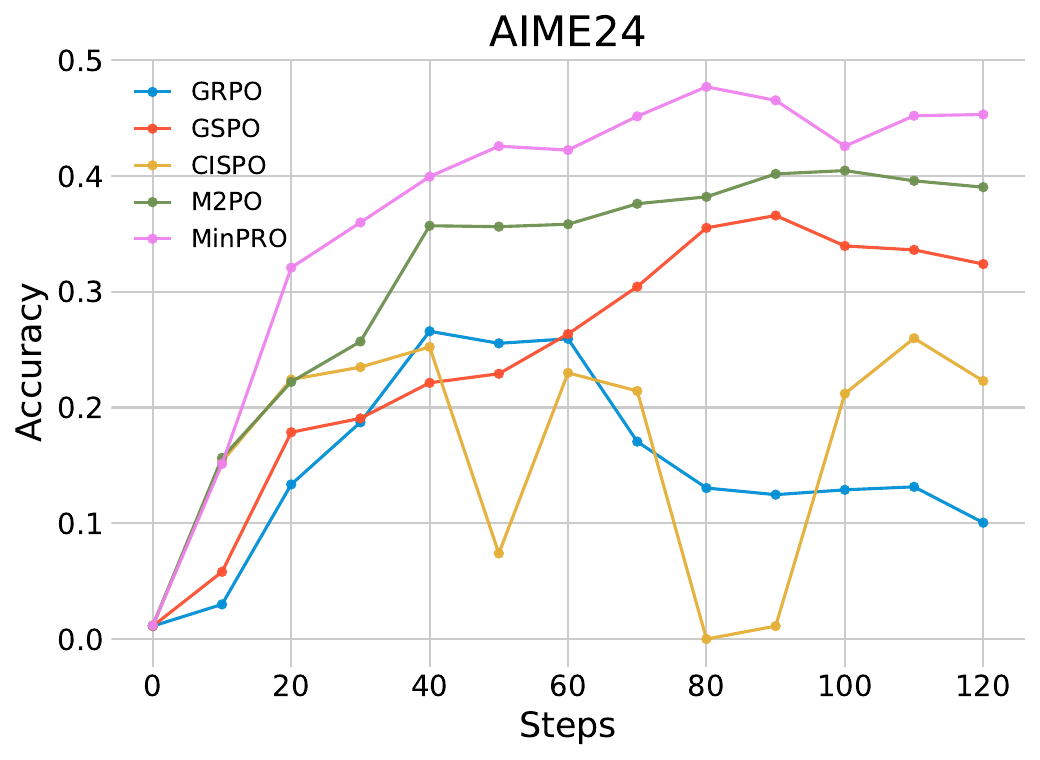}
}
\caption{(a) An overview of MinPRO. $\nabla_\theta \mathcal{J}$ denotes the policy gradient and $\rho_t$ is the token-level importance sampling (IS) ratio. We take a step back to derive the full IS ratio $\rho_{1:t}$, referred to as the prefix importance ratio. MinPRO is then developed by relaxing $\rho_{1:t} = \rho_{1:t-1} \cdot \rho_t$ to a non-cumulative proxy $\underline{\rho}_t \rho_t$. (b) AIME24 and AIME25 scores as functions of training steps for Qwen3-30B-A3B-Base under off-policy training.}
\label{figure:main}
\end{figure}

\section{Introduction}

Post-training has become an essential stage in modern large language model (LLM) development, complementing base model pre-training by aligning model behavior with human objectives and improving reasoning quality \citep{ouyang2022training,jaech2024openai}. In many practical scenarios, only final ground truth answers are available, and collecting detailed chain-of-thought annotations is expensive and often impractical. Reinforcement learning (RL) has therefore emerged as the most effective and widely-used framework for LLM post-training \citep{guo2025deepseek}. 

When optimizing LLMs, an off-policy workflow is typically adopted: rollouts are generated by an older behavior policy $\pi_{\mathrm{old}}$, while the optimization target is a newer policy $\pi$. This design is primarily driven by system efficiency considerations. In practice, rollouts are often generated in large batches, which helps alleviate GPU idle time caused by response-length imbalance and improves infrastructure throughput \citep{team2025kimi,gao2025rollpacker}. These rollout batches are then divided into multiple mini-batches for gradient updates. Moreover, recent asynchronous RL frameworks further decouple rollout generation from policy gradient updates, introducing additional policy lag and amplifying off-policyness \citep{noukhovitch2024asynchronous,roux2025tapered}. Although operating in a large off-policy regime can substantially improve rollout efficiency, the resulting rollout distribution shift poses severe challenges to optimization stability and often leads to training collapse under high off-policyness \citep{xi2025bapo,zheng2025prosperity}.

To correct the rollout distribution discrepancy arising in off-policy training, a standard approach is to apply the importance sampling ratio $\pi / \pi_{\mathrm{old}}$. Following the seminal PPO formulation \citep{schulman2017proximal}, most LLM RL objectives adopt a token-level importance sampling ratio due to its computational simplicity and numerical stability. However, by revisiting the policy gradient under off-policy conditions, we show that the theoretically rigorous correction term is the prefix importance ratio rather than the token-level ratio. While the token-level relaxation is often effective in near on-policy settings, it fails to capture the true policy mismatch in LLMs given the long rollouts typically generated. As a result, this approximation frequently leads to unstable optimization and inferior or even collapsed training under large off-policy drift.

Motivated by this insight, we seek to incorporate prefix-level information to stabilize policy optimization. In autoregressive LLMs, the prefix importance ratio is defined as a cumulative product, which is often numerically unstable and therefore impractical to use directly. To address this challenge, we introduce {\it Minimum Prefix Ratio} (\textbf{MinPRO}), a simple yet effective objective to stabilize LLM post-training. MinPRO replaces the cumulative prefix ratio with a simple surrogate: the current token ratio multiplied by the minimum token ratio observed in the preceding prefix (see \Cref{figure:sketch}). This formulation preserves essential prefix-level information while mitigating numerical instability and length bias by avoiding explicit cumulative products.

To comprehensively evaluate the stability of MinPRO, we conduct RL post-training under a large off-policy regime and compare it against several widely used and strong baselines, including GRPO \citep{shao2024deepseekmath,yu2025dapo}, GSPO \citep{zheng2025group}, CISPO \citep{chen2025minimax}, and M2PO \citep{zheng2025prosperity}. As pre-trained models, we consider two dense LLMs, Qwen3-8B-Base and Qwen3-14B-Base, as well as the mixture-of-experts (MoE) model Qwen3-30B-A3B-Base \citep{yang2025qwen3}. We assess performance on a suite of mathematical reasoning benchmarks, including AMC23, AIME24, AIME25, MATH500, Olympiad, Minerva, and GSM8K. From the training reward curves, which play a role analogous to training loss curves in supervised learning, MinPRO consistently exhibits higher rewards and more stable optimization dynamics throughout training. Evaluation on downstream benchmarks further shows that MinPRO achieves the highest peak performance among all compared methods, demonstrating strong stability under off-policy conditions.

\section{Related Works}

\paragraph{LLM Policy Optimization} Pre-training equips LLMs with broad factual and linguistic knowledge, yet additional techniques are required to strengthen their reasoning ability. Chain-of-thought (CoT) prompting, which encourages models to decompose problems into intermediate steps, has emerged as an effective approach for improving reasoning performance \citep{wei2022chain}. More recently, RL-based post-training has been shown to induce CoT behaviors without requiring explicit step-by-step annotations. Seminal systems such as OpenAI-o1 \citep{jaech2024openai} and DeepSeek-R1 \citep{guo2025deepseek} leverage RL to incentivize deliberate multi-step reasoning, leading to substantial gains on challenging tasks including mathematical problem solving and code generation. Many RL objectives have been developed from different perspectives, including clip-range design \citep{yu2025dapo,yang2025dcpo,sheng2025espo}, token ratio \citep{wang2025aspo}, and token entropy \citep{cui2025entropy,wang2025beyond,lei2025revisiting}. In contrast to hard clipping, which discards all tokens whose importance ratios fall outside the trust region, CISPO \citep{chen2025minimax} introduces a soft-clipping mechanism that preserves all token-level gradients while constraining only the magnitude of the importance-weighting factor. In this paper, we develop an RL objective grounded in the prefix importance ratio. While GSPO \citep{zheng2025group} replaces the token-level ratio with a full-sequence importance ratio, our approach neither discards token-level information nor relies on the complete sequence ratio. Instead, we focus on the prefix ratio, which provides a more flexible and fine-grained correction signal.

\paragraph{Off-Policy RL in LLM} Off-policy training, where rollouts are sampled using an older policy version and then used to update a newer policy, is common in LLM post-training. Using stale rollouts brings notable efficiency benefits. First, generating a large batch of rollouts at once reduces GPU idle time compared with repeatedly generating many smaller batches \citep{team2025kimi,gao2025rollpacker}. Second, asynchronous execution between inference (rollout generation) and training (gradient updates) further improves overall throughput \citep{noukhovitch2024asynchronous,roux2025tapered,fu2025areal,sheng2025laminar}. However, excessive data staleness can cause severe instability or even full training collapse. To address this challenge, several recent methods aim to stabilize training under large off-policy drift. For example, BAPO \citep{xi2025bapo} and M2PO \citep{zheng2025prosperity} adjust the clipping mechanism to dynamically filter unstable tokens. In contrast, our work adopts a principled approach grounded in theoretical analysis and leverages prefix importance ratio information to address instability under off-policy optimization.

\section{Preliminaries}

Given a question or prompt $\bm{q}$, an LLM $\pi_\theta$, parameterized by $\theta$, generates a response $\bm{o} = (o_1, o_2, \ldots, o_T)$ in an autoregressive manner, where each token $o_t$ is sampled according to $\pi_\theta(o_t \mid \bm{q}, \bm{o}_{<t})$. The training dataset is given by $\mathcal{S} = \{(\bm{q}_i, \bm{a}_i)\}_{i=1}^m$, where $\bm{a}_i$ denotes the ground truth final answer to $\bm{q}_i$ without any intermediate reasoning steps. Although $\bm{a}_i$ does not include a full reasoning chain, it still provides a reliable supervision signal because the correctness of a generated response can be verified by comparing its final answer with $\bm{a}_i$. Many contemporary LLM RL post-training methods are built upon the PPO clip objective \citep{schulman2017proximal}, which applies token-level clipping to control updates according to positive and negative advantages. PPO relies on a learned critic to estimate token advantages, while recently developed critic-free algorithms such as GRPO \citep{shao2024deepseekmath} adopt multi sample Monte Carlo estimation instead of value function learning. In this paper, we focus on the critic-free paradigm and present a detailed exposition of two representative RL objectives within this family, namely the hard-clipping method GRPO and the soft-clipping approach CISPO.

\vspace{1mm}

\textbf{GRPO} estimates token advantages by sampling multiple rollouts per prompt. Its objective is defined as
\begin{align}
\mathcal{J}_{\mathrm{GRPO}}(\theta)=  \mathbb{E}_{\bm{q} \sim \mathcal{S},\left\{\bm{o}^i\right\}_{i=1}^G \sim \pi_{\theta_{\mathrm{old}}}(\cdot \mid \bm{q})} 
 {\left[\frac{1}{G} \sum_{i=1}^G \frac{1}{\left|\bm{o}^i\right|} \sum_{t=1}^{\left|\bm{o^i}\right|}\min \left(\rho_t^i(\theta) \hat{A}_t^i, \operatorname{clip}\left(\rho_t^i(\theta), 1-\varepsilon, 1+\varepsilon\right) \hat{A}_t^i\right)\right] }, \nonumber
\end{align}
where $\pi_{\theta_{\mathrm{old}}}$ is the sampling policy and $\rho_t^i(\theta)=\frac{\pi_\theta\left(o_t^i \mid \bm{q}, \bm{o}_{<t}^i\right)}{\pi_{\theta_{\text {old }}}\left(o_t^i \mid \bm{q}, \bm{o}_{<t}^i\right)}$ denotes the importance sampling ratio. After generating $G$ responses for each prompt $\bm{q}$, the token advantage is estimated as $\hat{A}_t^i=\frac{R^i-\operatorname{mean}\left(\left\{R^j\right\}_{j=1}^G\right)}{\operatorname{std}\left(\left\{R^j\right\}_{j=1}^G\right)}$, where $R^i$ is the reward assigned to the response $\bm{o}^i$ and is typically computed by comparing the predicted final answer with the ground truth.

\vspace{1mm}

\textbf{CISPO \quad}  In contrast to hard-clipping methods such as PPO and GRPO, which eliminate tokens whose importance ratios fall outside the trust region, CISPO \citep{chen2025minimax} adopts a soft-clipping strategy that retains all token-level gradients and constrains only extreme ratio values, thereby controlling gradient magnitudes rather than masking gradients. The CISPO objective is given by
\begin{align}
\mathcal{J}_{\mathrm{CISPO}}(\theta) =  & \mathbb{E}_{q \sim \mathcal{S},\left\{\bm{o}^i\right\}_{i=1}^G \sim \pi_{\theta_{\text {old }}}(\cdot \mid q)} \nonumber \\ & {\left[\frac{1}{\sum_{i=1}^G\left|\bm{o}^i\right|} \sum_{i=1}^G \sum_{t=1}^{\left|\bm{o}^i\right|} \operatorname{sg}\left(\operatorname{clip}\left(\rho_t^i(\theta), 1-\varepsilon_{\text {low }}, 1+\varepsilon_{\text {high }}\right)\right) \hat{A}_t^i \log \pi_\theta\left(o_t^i \mid \bm{q}, \bm{o}_{<t}^i\right)\right] }, \nonumber
\end{align}
where $\operatorname{sg}(\cdot)$ denotes the stop gradient operator, ensuring that clipping affects only the gradient magnitude.

\section{Method}

Most of LLM RL post-training methods follow the PPO formulation and therefore use the token-level importance ratio $\rho_t=\frac{\pi_\theta\left(o_t \mid \bm{o}_{<t}\right)}{\pi_{\theta_{\text {old}}}\left(o_t \mid  \bm{o}_{<t}\right)}$, where we omit the prompt $\bm{q}$ for simplicity. In this section, {\it we take a step back and revisit how importance sampling ratios arise in the RL objective}, showing that the commonly used token-level ratio is an inaccurate approximation that leads to unstable optimization when training LLMs under large off-policy conditions.

\subsection{A Step Back}

For an LLM $\pi_\theta$, recall the generated response $\bm{o}=(o_1,o_2,...,o_T)$. Ignoring the prompt distribution for simplicity, the standard RL objective is given by the expected cumulative reward  $\mathcal{J}(\theta) \;=\; \mathbb{E}_{\bm{o} \sim \pi_\theta}[R(\bm{o})]$. Then the policy gradient $\nabla_\theta \mathcal{J}(\theta)$ is obtained via the following theorem.
\begin{lemma}[Policy Gradient Theorem \citep{sutton1999policy}]
\label{thm:pgt}
Let $\bm{o}= (o_1, o_2, \ldots, o_T)$ denote a trajectory generated by $\pi_\theta$.  
The gradient of the RL objective satisfies
\begin{equation}
\nabla_\theta \mathcal{J}(\theta)
=
\sum_{t=1}^T
\mathbb{E}_{(o_1,...,o_t)\sim \pi_\theta}
\Big[
    \nabla_\theta \log \pi_\theta(o_t \mid \bm{o}_{<t}) \, A^\pi(o_t;\bm{o}_{<t})
\Big], \nonumber
\end{equation}
where $\bm{o}_{< t} = (o_1,..., o_{t-1})$ is the prefix prior to step $t$ and  
$A^\pi(o_t;\bm{o}_{<t})$ is the advantage under $\pi_\theta$.
\end{lemma}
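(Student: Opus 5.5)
We plan to use the log-derivative (REINFORCE) identity and then reduce the return to an advantage by two variance-free manipulations: causality and a baseline. We treat the response length $T$ as fixed, for instance by padding with an absorbing end-of-sequence token. The reward $R(\bm{o})$ is assigned at the end of the trajectory.

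First, write $\mathcal{J}(\theta)=\sum_{\bm{o}} \pi_\theta(\bm{o}) R(\bm{o})$ with the autoregressive factorization $\pi_\theta(\bm{o})=\prod_{t=1}^T \pi_\theta(o_t\mid \bm{o}_{<t})$. Differentiating under the (finite) sum and using $\nabla \pi_\theta = \pi_\theta \nabla\log\pi_\theta$ gives
\begin{equation}
\nabla_\theta \mathcal{J}(\theta)=\mathbb{E}_{\bm{o}\sim\pi_\theta}\Big[R(\bm{o})\sum_{t=1}^T \nabla_\theta\log\pi_\theta(o_t\mid\bm{o}_{<t})\Big]
=\sum_{t=1}^T \mathbb{E}_{\bm{o}\sim\pi_\theta}\big[\nabla_\theta\log\pi_\theta(o_t\mid\bm{o}_{<t})\,R(\bm{o})\big]. \nonumber
\end{equation}
The vocabulary is finite, so exchanging derivative and sum is immediate.

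Second, for each $t$ we condition on $(o_1,\dots,o_t)$ via the tower property. The score term is measurable with respect to this prefix. The conditional expectation $\mathbb{E}[R(\bm{o})\mid o_{1:t}]$ is by definition the action-value $Q^\pi(o_t;\bm{o}_{<t})$, since the state is the prefix $\bm{o}_{<t}$ and the action is $o_t$. This reduces the $t$-th term to an expectation over $(o_1,\dots,o_t)\sim\pi_\theta$ of $\nabla\log\pi_\theta(o_t\mid\bm{o}_{<t})\,Q^\pi(o_t;\bm{o}_{<t})$.

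Third, we subtract the baseline $V^\pi(\bm{o}_{<t})=\mathbb{E}_{o_t\sim\pi_\theta(\cdot\mid\bm{o}_{<t})}Q^\pi(o_t;\bm{o}_{<t})$. Conditioning on $\bm{o}_{<t}$, the inner expectation equals $V^\pi(\bm{o}_{<t})\sum_{o_t}\nabla\pi_\theta(o_t\mid\bm{o}_{<t})$, and this is $V^\pi(\bm{o}_{<t})\,\nabla 1=0$. Replacing $Q^\pi$ by $A^\pi=Q^\pi-V^\pi$ therefore yields the stated formula.

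The main obstacle is mostly bookkeeping rather than a deep difficulty. Two points need care. One is making precise that the expectation over full trajectories can be marginalized to prefixes $(o_1,\dots,o_t)$, which is exactly the tower-property step. The other is handling variable-length responses, which we address with the absorbing-EOS convention so that all sums run to a common $T$.
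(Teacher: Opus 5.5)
Your proposal is correct and follows essentially the same route as the paper's proof in Appendix~\ref{app:proof}: the REINFORCE identity, conditioning on $\bm{o}_{\leq t}$ via the tower property to obtain $Q^\pi$, and the score-function baseline identity to replace $Q^\pi$ by $A^\pi$. The only differences are bookkeeping. The paper allows per-step rewards $R(\bm{o})=\sum_t r_t$, so it applies the baseline lemma a second time to drop the past return $R_{<t}$; your terminal-reward assumption makes that step unnecessary, and your absorbing-EOS convention states explicitly the fixed-$T$ assumption that the paper leaves implicit.
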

We provide a proof of this lemma in Appendix \ref{app:proof} under an LLM-specific autoregressive setting. Based on \Cref{thm:pgt}, we can directly derive the policy gradient in the off-policy setting, where the response is generated by a separate sampling policy $\pi_{\theta_{\mathrm{old}}}$.
\begin{theorem}[Policy Gradient under Off-policy Conditions]
\label{corollary:main}
Suppose trajectories are sampled from an older policy $\pi_{\theta_{\mathrm{old}}}$ and used to optimize the current policy $\pi_\theta$.
Let the prefix importance ratio be
\[
\rho_{1:t}
=
\frac{ \mathbb{P}_\theta(o_1, \ldots, o_t)}{\mathbb{P}_{\theta_{\mathrm{old}}}(o_1, \ldots, o_t)}
=
\prod_{i=1}^t
\frac{
    \pi_\theta(o_i \mid \bm{o}_{<i})
}{
    \pi_{\theta_{\mathrm{old}}}(o_i \mid \bm{o}_{<i})
}
=
\prod_{i=1}^t \rho_i,
\]
where $\mathbb{P}_\theta(o_1,...,o_t)$ denotes the probability of generating the sequence $(o_1, ..., o_t)$ under $\pi_\theta$. Then the policy gradient under off-policy sampling satisfies
\begin{equation}
\nabla_\theta \mathcal{J}(\theta)
=
\sum_{t=1}^T
\mathbb{E}_{(o_1,...,o_t) \sim \pi_{\theta_{\mathrm{old}}}}
\Big[
    \rho_{1:t}\,
    \nabla_\theta \log \pi_\theta(o_t \mid \bm{o}_{<t})\,
    A^\pi(o_t;\bm{o}_{<t}) 
\Big]. \nonumber
\end{equation}
\end{theorem}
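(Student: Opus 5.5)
The plan is to derive the statement directly from \Cref{thm:pgt} by a change of measure applied separately to each summand. For each fixed $t$, the $t$-th term of \Cref{thm:pgt} is an expectation over the prefix $(o_1,\ldots,o_t)$ drawn from the marginal law $\mathbb{P}_\theta(o_1,\ldots,o_t)$. Only these first $t$ tokens matter, because the integrand depends only on them: the advantage $A^\pi(o_t;\bm{o}_{<t})$ is already a conditional expectation over the continuation. So the only task is to rewrite this prefix expectation under the sampling law $\mathbb{P}_{\theta_{\mathrm{old}}}(o_1,\ldots,o_t)$.

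First I will establish the product formula stated in the theorem. By the autoregressive chain rule, $\mathbb{P}_\theta(o_1,\ldots,o_t)=\prod_{i=1}^t \pi_\theta(o_i\mid\bm{o}_{<i})$, and the same holds for $\theta_{\mathrm{old}}$. Taking the quotient gives $\rho_{1:t}=\prod_{i=1}^t\rho_i$.

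Next, write $f_t(o_1,\ldots,o_t)=\nabla_\theta\log\pi_\theta(o_t\mid\bm{o}_{<t})\,A^\pi(o_t;\bm{o}_{<t})$. Since the vocabulary is finite, each expectation is a finite sum over prefixes:
\[
\sum_{o_{1:t}}\mathbb{P}_\theta(o_{1:t})\,f_t(o_{1:t})=\sum_{o_{1:t}}\mathbb{P}_{\theta_{\mathrm{old}}}(o_{1:t})\,\frac{\mathbb{P}_\theta(o_{1:t})}{\mathbb{P}_{\theta_{\mathrm{old}}}(o_{1:t})}\,f_t(o_{1:t}),
\]
which equals $\mathbb{E}_{o_{1:t}\sim\pi_{\theta_{\mathrm{old}}}}[\rho_{1:t}f_t]$. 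Summing over $t$ then yields the claim. Crucially, $A^\pi$ remains the advantage under the current policy $\pi_\theta$; only the sampling measure of the prefix changes.

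The main obstacle is the support condition needed for the change of measure. We must assume $\pi_{\theta_{\mathrm{old}}}(o_i\mid\bm{o}_{<i})>0$ whenever $\pi_\theta(o_i\mid\bm{o}_{<i})>0$, so that $\mathbb{P}_\theta\ll\mathbb{P}_{\theta_{\mathrm{old}}}$ on prefixes. This holds automatically for softmax policies, which assign positive probability to every token, and I will state it explicitly.

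A secondary subtlety is variable response length. I will handle it either by padding trajectories to a fixed horizon $T$ with an absorbing end-of-sequence token, on which the gradient terms vanish, or by interpreting the $t$-th summand as ranging only over prefixes that have not yet terminated. In either form the prefix ratio is unaffected.
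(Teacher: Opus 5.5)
Your proposal is correct and follows the paper's intended argument. The paper gives no separate proof and says only that the result follows directly from \Cref{thm:pgt}; that is your per-term change of measure from $\mathbb{P}_\theta(o_{1:t})$ to $\mathbb{P}_{\theta_{\mathrm{old}}}(o_{1:t})$ with weight $\rho_{1:t}$, together with the chain-rule factorization of the prefix probabilities. Your explicit support condition and your handling of variable-length responses are sound additions that the paper leaves implicit.
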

As shown in \Cref{corollary:main}, the prefix importance ratio $\rho_{1:t}$ provides a theoretically rigorous correction for distribution shift under off-policy regimes, whereas existing approaches rely on the relaxed token-level ratio $\rho_t$ due to its computational simplicity and numerical stability. To illustrate the consequences of this approximation, we compare the training dynamics of three widely used methods, GRPO, GSPO, and CISPO, under both on-policy and off-policy settings in \Cref{figure:training collapse}. For hard-clipping methods such as GRPO and GSPO, operating in an off-policy regime results in substantially lower rewards with severe oscillations, accompanied by similarly unstable entropy dynamics, ultimately leading to degraded performance. In contrast, the soft-clipping method CISPO exhibits an even more pronounced failure mode, with training collapsing in both on-policy and off-policy settings as rewards abruptly drop and entropy explodes to extremely large values. In sum, while relaxing the prefix importance ratio $\rho_{1:t}$ to its token-level approximation $\rho_t$ is often workable in on-policy regimes, relying on the token-level ratio leads to unstable optimization and inferior or even collapsed training under off-policy conditions.

By contrasting the relaxed token-level ratio $\rho_t$ with the full prefix ratio $\rho_{1:t}$, we observe that in autoregressive LLM environments, where $\rho_{1:t} = \prod_{i=1}^t \rho_i$ grows multiplicatively with sequence length, the token-level approximation rapidly diverges from the true prefix ratio as off-policyness increases. Moreover, LLM-generated rollouts are often very long, frequently exceeding $10{,}000$ tokens, which further amplifies this divergence. This growing mismatch ultimately leads to unstable optimization dynamics and training collapse in large off-policy regimes. As a result, the commonly used token-level ratio $\rho_t$ becomes increasingly unreliable under significant off-policy drift.

\begin{figure}[t]
\centering
\subfigure[Rewards vs. Steps]{
\begin{minipage}[b]{\textwidth}
\centering
\includegraphics[width=0.325\columnwidth]{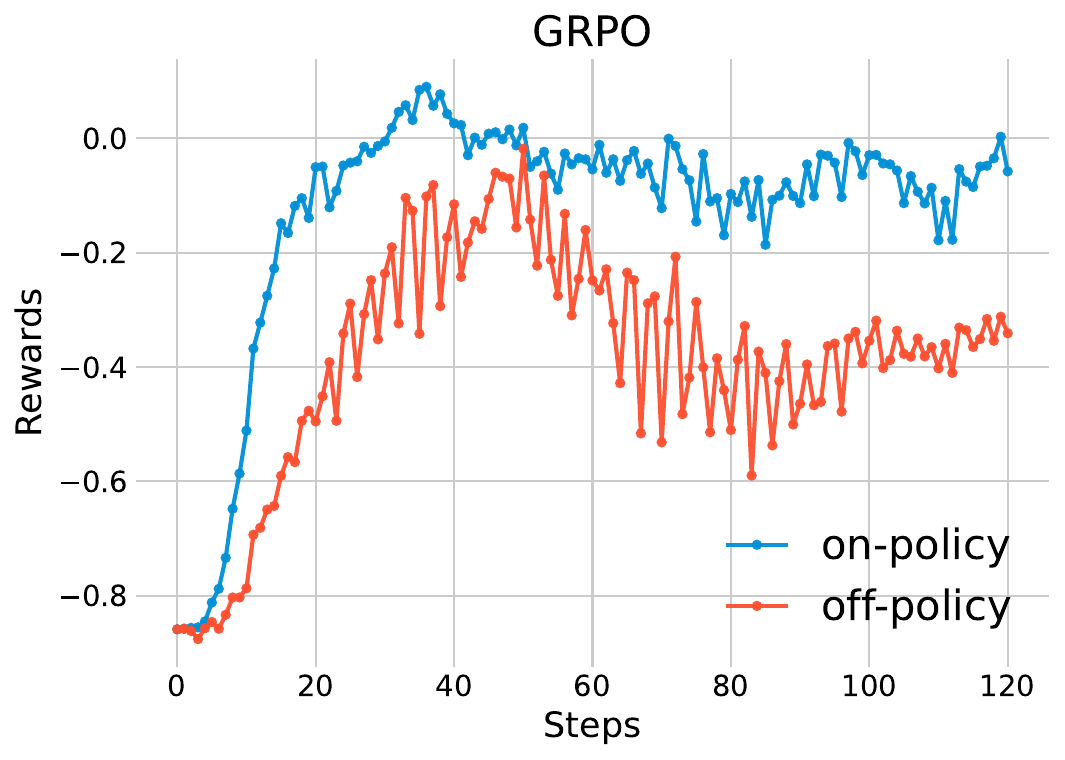}
\hfill
\includegraphics[width=0.325\columnwidth]{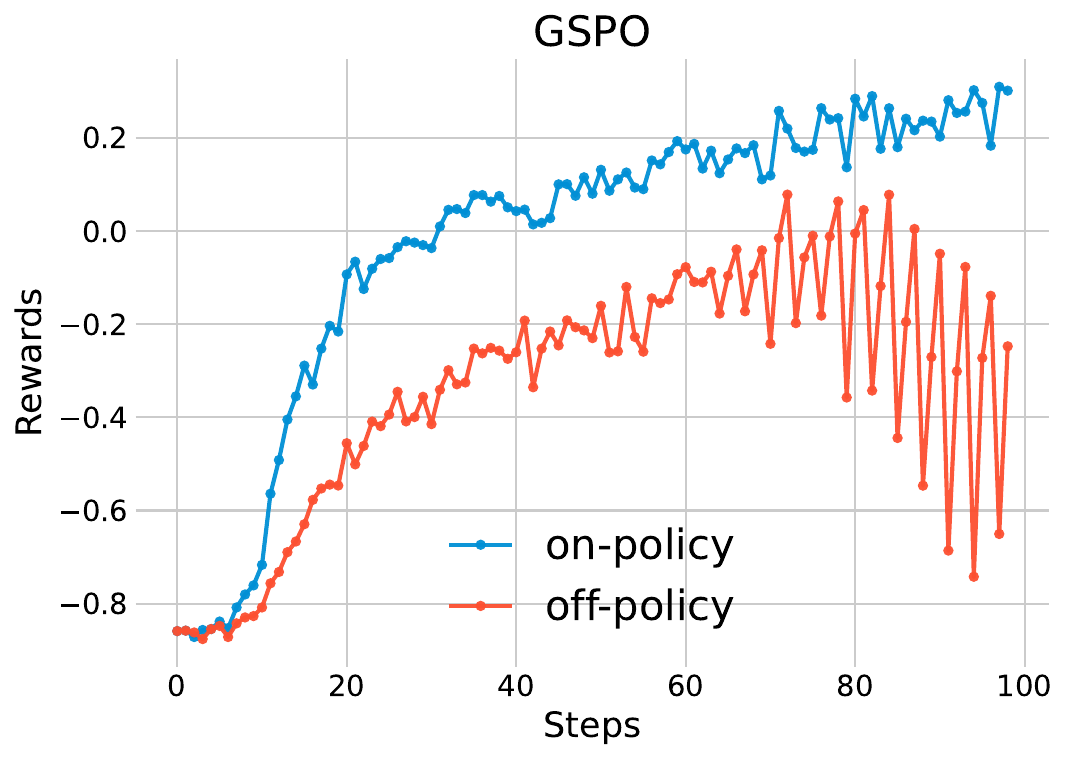}
\hfill
\includegraphics[width=0.325\columnwidth]{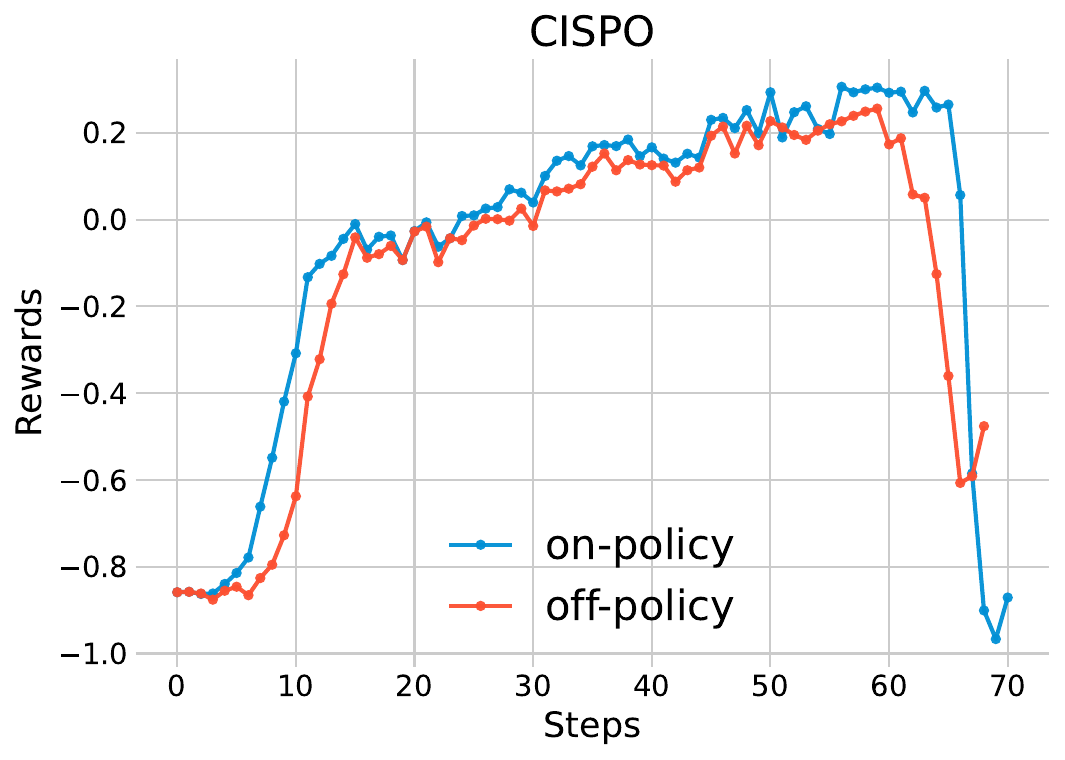}
\end{minipage}
\label{figure:step vs rewards}  
}
\subfigure[Entropy vs. Steps]{
\begin{minipage}[b]{\textwidth}
\centering
\includegraphics[width=0.325\columnwidth]{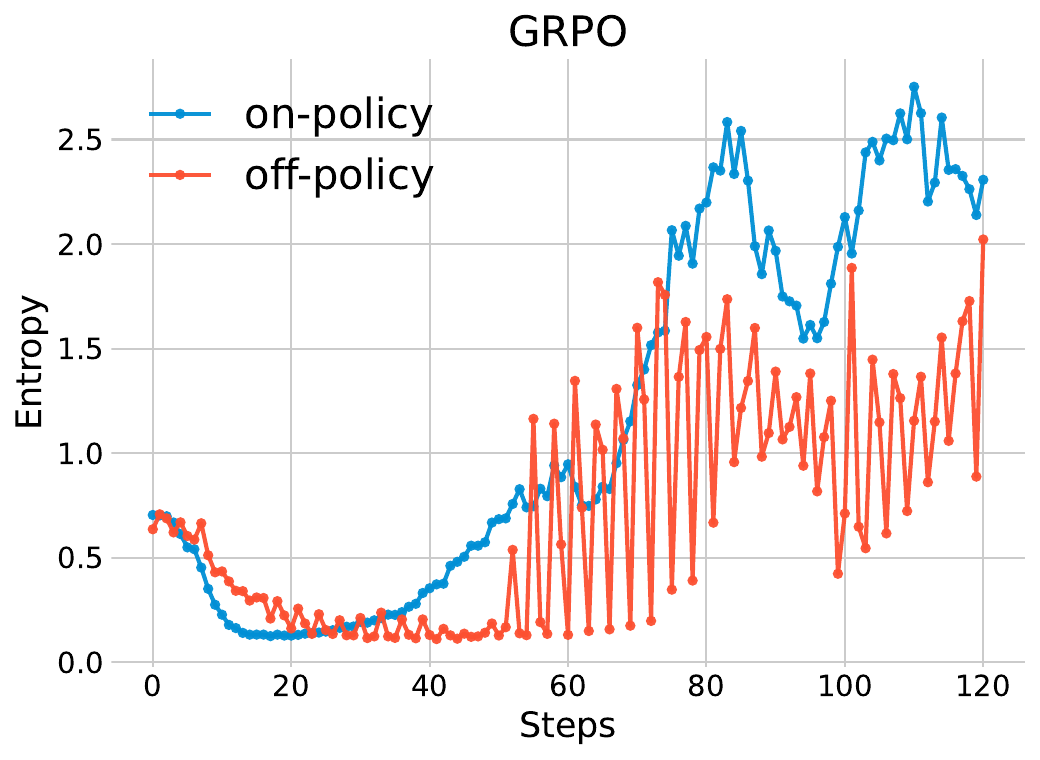}
\hfill
\includegraphics[width=0.325\columnwidth]{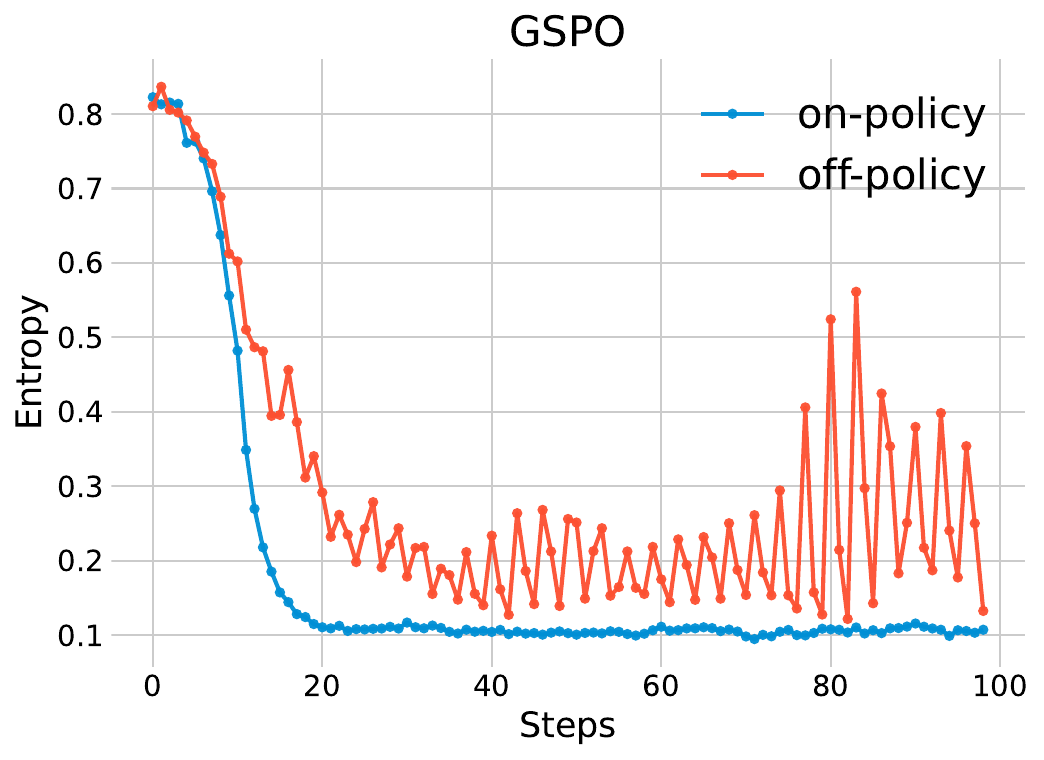}
\hfill
\includegraphics[width=0.325\columnwidth]{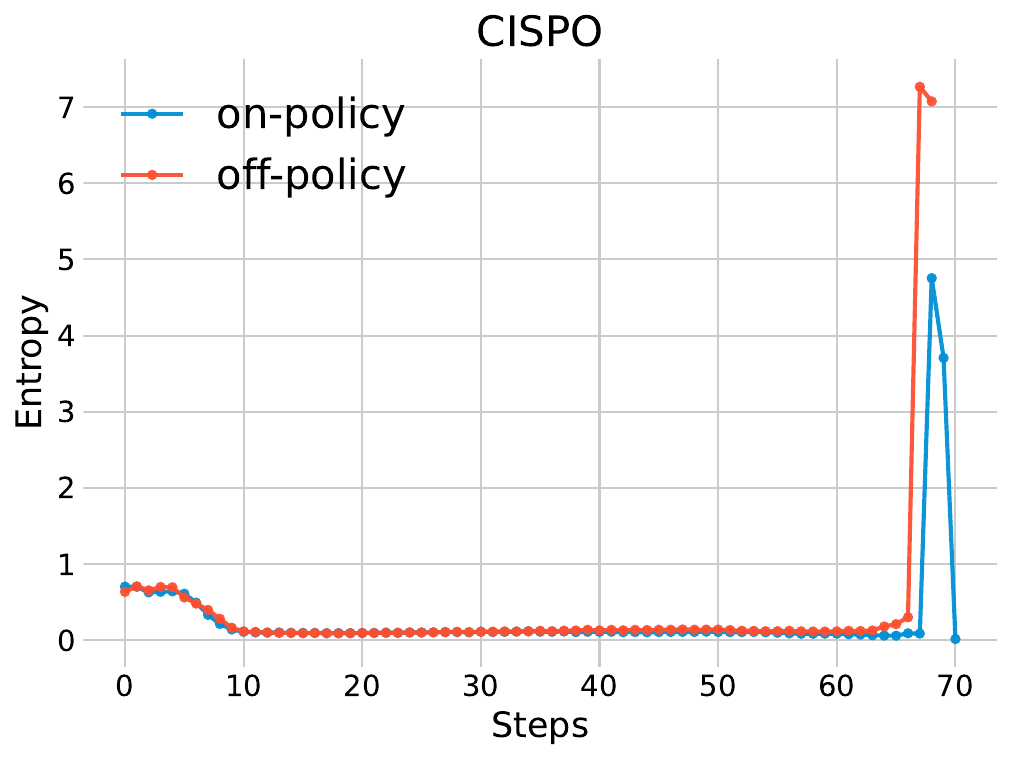}
\end{minipage}
\label{figure:step vs entropy}  
}
\caption{Plots of (a) reward and (b) entropy as functions of training steps for Qwen3-8B-Base under on-policy and off-policy regimes. In the off-policy setting, each batch of sampled rollouts is stored in a buffer and used for training after a delay of $2$ global steps, whereas the on-policy setting applies no delay between sampling and optimization.}
\label{figure:training collapse}
\end{figure}

\subsection{MinPRO}

To achieve stable LLM optimization, a more principled treatment should be adopted and explicitly incorporates the information carried by the prefix ratio $\rho_{1:t}$. A naive approach would be to use $\rho_{1:t}$ directly, possibly with heuristics such as upper clipping to prevent numerical explosion. However, since the prefix ratio is a cumulative product, it suffers from two fundamental limitations. First, it is prone to extreme values, resulting in large variance. Second, these extreme values tend to arise near the end of the generated sequence, introducing a strong length bias. This bias leads to inconsistent updates across token positions and ultimately undermines effective policy optimization.

To incorporate prefix ratio information while eliminating the large variance and length bias induced by cumulative token ratio products, we propose a simple yet stable proxy termed \textit{Minimum Prefix Ratio} (MinPRO). Instead of using the full prefix ratio $\rho_{1:t} = \rho_{1:t-1} \cdot \rho_t$, we consider only the smallest token ratio that appears prior to step $t$:
\[
{\color{black}
\underline{\rho}_t = \min_{i < t} \rho_i}.
\]
We then approximate the prefix ratio as $\rho_{1:t} \approx \underline{\rho}_t \cdot \rho_t$. This formulation replaces the unstable cumulative product with a non-cumulative and numerically stable surrogate that preserves essential prefix-level correction signals while mitigating variance and sequence-length-induced artifacts. We therefore formulate the MinPRO objective as follows:
\begin{align}
\mathcal{J}_{\mathrm{MinPRO}}(\theta) =  & \mathbb{E}_{q \sim \mathcal{S},\left\{\bm{o}^i\right\}_{i=1}^G \sim \pi_{\theta_{\text {old }}}(\cdot \mid q)} \nonumber \\ & {\left[\frac{1}{\sum_{i=1}^G\left|\bm{o}^i\right|} \sum_{i=1}^G \sum_{t=1}^{\left|\bm{o}^i\right|} \operatorname{sg}\left(\operatorname{clip}\left({\color{magenta}\underline{\rho}_t^i}  \rho_t^i, 1-\varepsilon_{\text {low }}, 1+\varepsilon_{\text {high }}\right)\right) \hat{A}_t^i \log \pi_\theta\left(o_t^i \mid \bm{q}, \bm{o}_{<t}^i\right)\right] } \nonumber
\end{align}
As shown in the MinPRO formulation, the key modification is to multiply the token ratio $\rho_t$ by the minimum prefix ratio $\underline{\rho}_t$. Intuitively, when the prefix ratio $\rho_{1:t-1}$ becomes extremely small, it indicates that the prefix fragment is unlikely to be sampled under the current policy. In this case, the gradient associated with token $o_t$ should not exert a strong influence on policy optimization, even if the current token ratio $\rho_t$ remains within a normal range. However, prior approaches ignore prefix-level information and still apply this gradient, which can lead to unstable updates under large off-policy drift. MinPRO alleviates this issue by incorporating the simple yet effective factor $\underline{\rho}_t$, which suppresses the contribution of such tokens when the prefix suggests that the trajectory is unlikely under the current policy, thereby stabilizing optimization in off-policy regimes.

\section{Experiments}
\label{sec:expriments}

In this section, we evaluate the optimization stability of MinPRO across a range of LLMs and mathematical reasoning benchmarks under off-policy training.

\vspace{1mm}

\textbf{Datasets and Models \ \ } We perform RL post-training on the DAPO-Math-17K dataset \citep{yu2025dapo}, which consists of $17{,}000$ mathematical questions, each paired with a final integer answer. As base models, we use two dense LLMs, Qwen3-8B-Base and Qwen3-14B-Base, as well as an MoE model, Qwen3-30B-A3B-Base \citep{yang2025qwen3}. These models are widely adopted pre-trained LLMs that have not undergone instruction tuning or reasoning-specific training, making them well-suited testbeds for evaluating post-training algorithms designed to elicit reasoning capabilities. During post-training, we evaluate mathematical reasoning performance on seven benchmarks of AMC23, AIME24, AIME25, MATH500, Olympiad, Minerva, and GSM8K.

\vspace{1mm}

\textbf{Setup \ \ } We conduct LLM RL post-training using the VeRL framework \citep{sheng2025hybridflow} and evaluate performance using the \texttt{pass@k} metric, which measures the success rate within $k$ sampled attempts. Our experiments focus on the critic-free paradigm, where token-level advantages are estimated directly via normalized rollout rewards. The maximum response length is set to $20{,}480$ tokens, and no KL regularization is applied during training. For Qwen3-8B-Base and Qwen3-30B-A3B-Base, we use a batch size of $512$ and a mini-batch size of $32$. For Qwen3-14B-Base, to avoid out-of-memory issues, we adopt a batch size of $256$ and a mini-batch size of $16$. Both configurations result in $16$ parameter update steps per global training step. Qwen3-8B-Base, Qwen3-14B-Base, and Qwen3-30B-A3B-Base are trained for $120$, $160$, and $120$ global steps, respectively, which is sufficient for convergence. Checkpoints are saved every $10$ global steps, and for evaluation, we select the checkpoint with the highest average score.

To evaluate training stability, we explicitly introduce a large off-policy regime by controlling policy lag through a rollout buffer. Specifically, each batch of sampled rollouts is stored in the buffer and used for training after a delay of $n$ global steps. Under this setup, the number of parameter updates separating the behavior policy $\pi_{\theta_{\mathrm{old}}}$ and the current policy $\pi_\theta$ ranges from $16n$ to $16(n+1)-1$. In our experiments, we set the data staleness $n=2$, which corresponds to a highly off-policy setting and leads to unstable optimization for many existing algorithms.

\vspace{1mm}

\textbf{Baselines \ \ } We evaluate several representative baseline methods along with their corresponding hyperparameters: (1) {GRPO} \citep{shao2024deepseekmath,yu2025dapo}: we set $\epsilon_{\mathrm{low}} = 0.2$ and $\epsilon_{\mathrm{high}} = 0.28$; (2) GSPO \citep{zheng2025group}: we set $\epsilon_{\mathrm{low}} = \epsilon_{\mathrm{high}}=2e-3$; (3) {CISPO} \citep{chen2025minimax}: we use $\epsilon_{\mathrm{low}} = 1$ and $\epsilon_{\mathrm{high}} = 4$; (4) {M2PO} \citep{zheng2025prosperity}: a GRPO-style hard-clipping method designed for large off-policy regimes, with budget $M_2 = 0.04$ following the original paper; (5) {MinPRO}: our proposed method, which adopts the same settings $\epsilon_{\mathrm{low}} = 1$ and $\epsilon_{\mathrm{high}} = 4$ as CISPO.

\begin{figure}[t]
\centering

\includegraphics[width=0.49\columnwidth]{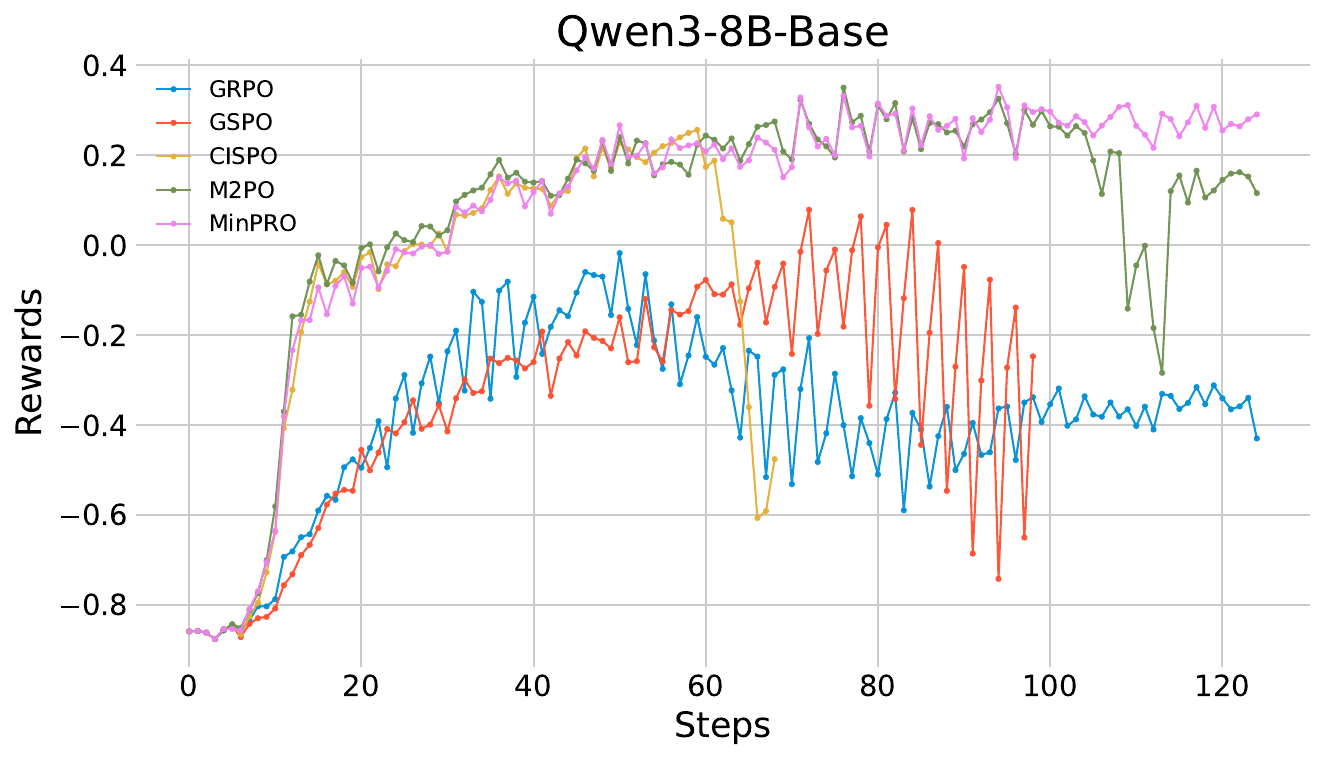}
\hfill
\includegraphics[width=0.49\columnwidth]{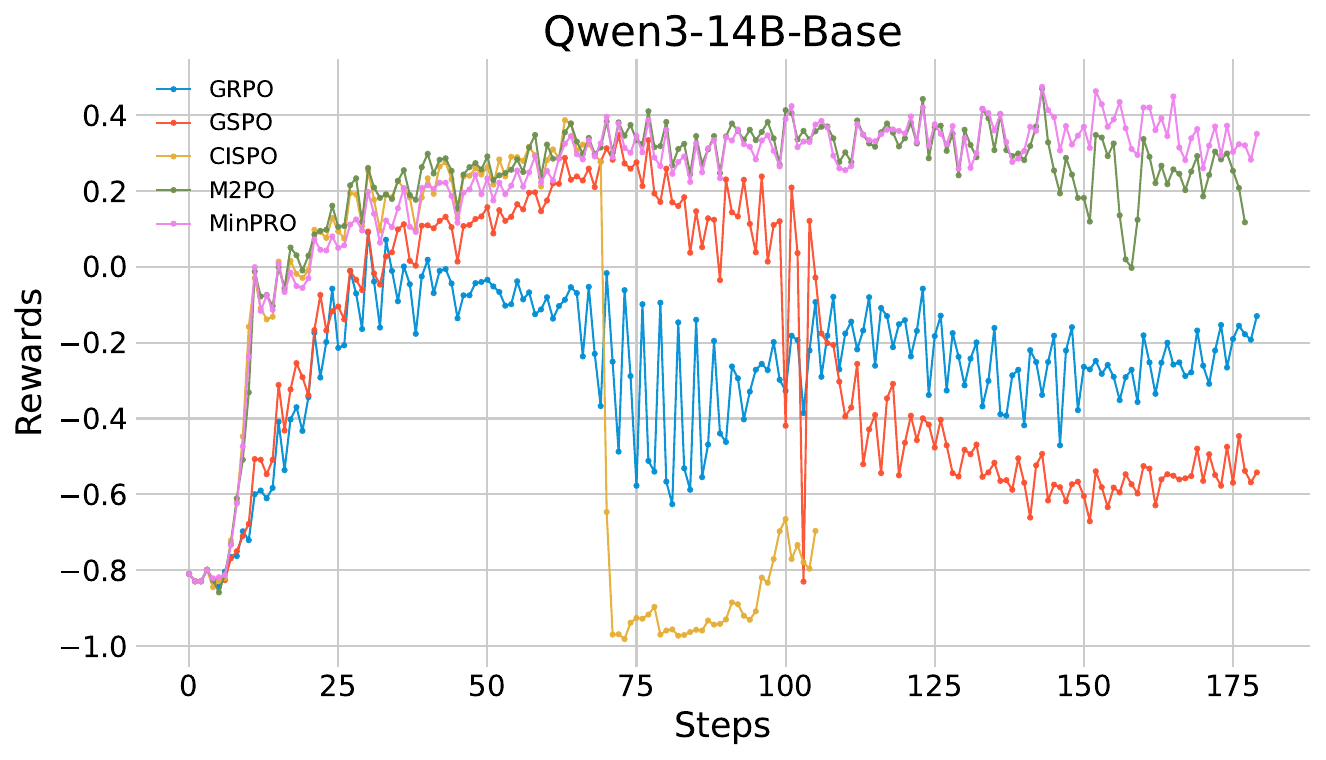}
\caption{Training reward curves as functions of training steps for Qwen3-8B-Base and Qwen3-14B-Base under off-policy optimization.}
\label{figure:rewards}
\end{figure}

\begin{table}[t]
\centering
\caption{\texttt{pass@1} scores under large off-policyness. Boldface indicates the highest score within each model.}
\label{tab:pass1-s2}
\resizebox{\textwidth}{!}{
\begin{tabular}{cc cccc  cccc}
\toprule
 & {Method} & AMC23 & AIME24 & AIME25 & MATH500 &  Olympiad & Minerva & GSM8K & Avg \\ 
\specialrule{1pt}{0.2\jot}{0.15pc}
\cellcolor{blue!15}  & {Base} & $21.6$ & $3.5$ & $3.5$ & $44.5$ & $20.2$ & $20.6$  & $70.3$ & $26.3$\\
\cellcolor{blue!15} & {GRPO}  & $63.3$ & $23.1$ & $18.5$ & $70.2$ & $40.4$ & $29.0$  & $80.4$ & $46.4$\\
\cellcolor{blue!15} & {GSPO}  & $64.4$ & $25.9$ & $18.3$ & $78.4$ & $44.4$ & $31.4$  & $87.3$ & $50.0$\\
\cellcolor{blue!15} & {CISPO} & $78.9$ & {$35.9$} & $25.1$ & $84.5$ & $54.3$ & $32.2$  & $86.7$ & $56.8$\\
\cellcolor{blue!15} & {M2PO} & {$82.2$} & $33.9$ & {$25.8$} & $85.8$ & {\bm{$56.5$}} & {\bm{$37.7$}}  & {$88.7$} & {$58.6$}\\
\multirow{-5.7}{*}{\cellcolor{blue!15}\rotatebox{90}{\small 8B-Base}} & {MinPRO} & {\bm{$82.5$}} & {\bm{$36.1$}} & {\bm{$30.3$}} & {\bm{$86.5$}} & {\bm{$56.5$}} & {$33.6$}  & {\bm{$90.2$}} & {\bm{$59.4$}}\\
\midrule
\cellcolor{blue!15}  & {Base} & $22.9$ & $2.8$ & $2.9$ & $68.8$ & $35.6$ & $24.6$  & $68.4$ & $32.3$\\
\cellcolor{blue!15} & {GRPO}  & $67.0$ & $26.2$ & $23.1$ & $80.0$ & $47.8$ & $31.1$ & $86.2$ & $51.6$\\
\cellcolor{blue!15} & {GSPO}  & $80.8$ & $44.1$ & {$31.8$} & $85.4$ & $54.6$ & $33.5$ & \bm{$92.4$} & $60.4$\\
\cellcolor{blue!15} & {CISPO} & $85.7$ & $45.3$ & $32.7$ & $87.4$ & $58.0$ & \bm{$35.7$}  & $87.0$ & $61.7$\\
\cellcolor{blue!15} & {M2PO} & $85.7$ & $46.3$ & $32.0$ & $87.7$ & $57.3$ & $33.8$  & {$91.8$} & $62.1$\\
\multirow{-5.5}{*}{\cellcolor{blue!15}\rotatebox{90}{\small 14B-Base}} 
 & {MinPRO} & \bm{$87.5$} & \bm{$47.0$} & \bm{$33.4$} & \bm{$88.1$} & \bm{$58.8$} & $33.3$  & $90.1$ & \bm{$62.6$}\\
\bottomrule
\end{tabular}}
\end{table}

\begin{table}[t]
\centering
\caption{Mean-dataset \texttt{pass@k} scores averaged over AMC23, AIME24, and AIME25 under large off-policyness. Boldface indicates the highest score within each model.}
\label{tab:passk avg}
\setlength{\tabcolsep}{9.2pt}
{
\begin{tabular}{c c cccccccc c}
\toprule
& \multirow{2}{*}{Method} & \multicolumn{8}{c}{\texttt{Pass@k}} & \multirow{2}{*}{Average} \\
\cmidrule(lr){3-10}
 &  & $1$ & $2$ & $4$ & $8$ & $16$ & $32$ & $64$ & $128$ &  \\
\specialrule{1pt}{0.2\jot}{0.15pc}

\cellcolor{blue!15} & GRPO   & $35.0$ & $43.2$ & $51.1$ & $57.8$ & $63.4$ & $67.7$ & $71.3$ & $73.9$ & $58.0$ \\
\cellcolor{blue!15} & GSPO   &                      $36.2$ & $44.0$ & $51.4$ & $58.0$ & $63.5$ & $68.0$ & $71.2$ & $73.5$ & $58.2$ \\
\cellcolor{blue!15} & CISPO  &                        $46.6$ & $54.3$ & $60.5$ & $65.2$ & $69.2$ & $72.6$ & \bm{$75.7$} & \bm{$78.3$} & $65.3$ \\
\cellcolor{blue!15} & M2PO   &                        $47.3$ & $54.4$ & $60.7$ & $65.7$ & $69.5$ & $72.8$ & $75.6$ & $77.5$ & $65.4$ \\
\multirow{-4.7}{*}{\cellcolor{blue!15}\rotatebox{90}{\small 8B-Base}} & MinPRO &                        \bm{$49.6$} & \bm{$56.7$} & \bm{$62.5$} & \bm{$67.0$} & \bm{$70.5$} & \bm{$73.1$} & $75.3$ & $77.2$ & \bm{$66.5$} \\
\midrule

\cellcolor{blue!15} &GRPO    & $38.8$ & $47.3$ & $54.7$ & $60.7$ & $65.7$ & $69.6$ & $72.8$ & $75.2$ & $60.6$ \\
\cellcolor{blue!15} & GSPO   &                        {$52.2$} & {$61.5$} & \bm{$68.0$} & \bm{$72.2$} & {$75.2$} & {$77.6$} & {$79.5$} & {$81.0$} & {$70.9$} \\
\cellcolor{blue!15} & CISPO  &                        $54.6$ & $62.0$ & $67.0$ & $71.2$ & $75.0$ & $78.3$ & $81.2$ & $83.4$ & $71.6$ \\
\cellcolor{blue!15} &M2PO   &                        $54.7$ & $61.1$ & $65.7$ & $69.8$ & $73.5$ & $76.7$ & $79.6$ & $82.1$ & $70.4$ \\
\multirow{-4.5}{*}{\cellcolor{blue!15}\rotatebox{90}{\small 14B-Base}} & MinPRO &                        \bm{$56.0$} & \bm{$63.0$} & \bm{$68.0$} & $72.0$ & \bm{$75.5$} & \bm{$78.9$} & \bm{$82.4$} & \bm{$85.4$} & \bm{$72.6$} \\
\bottomrule
\end{tabular}
}
\end{table}

\begin{figure}[t]
\centering
\subfigure[Rewards vs. Steps]{
\label{figure:30b-rewards}
\includegraphics[width=0.48\columnwidth]{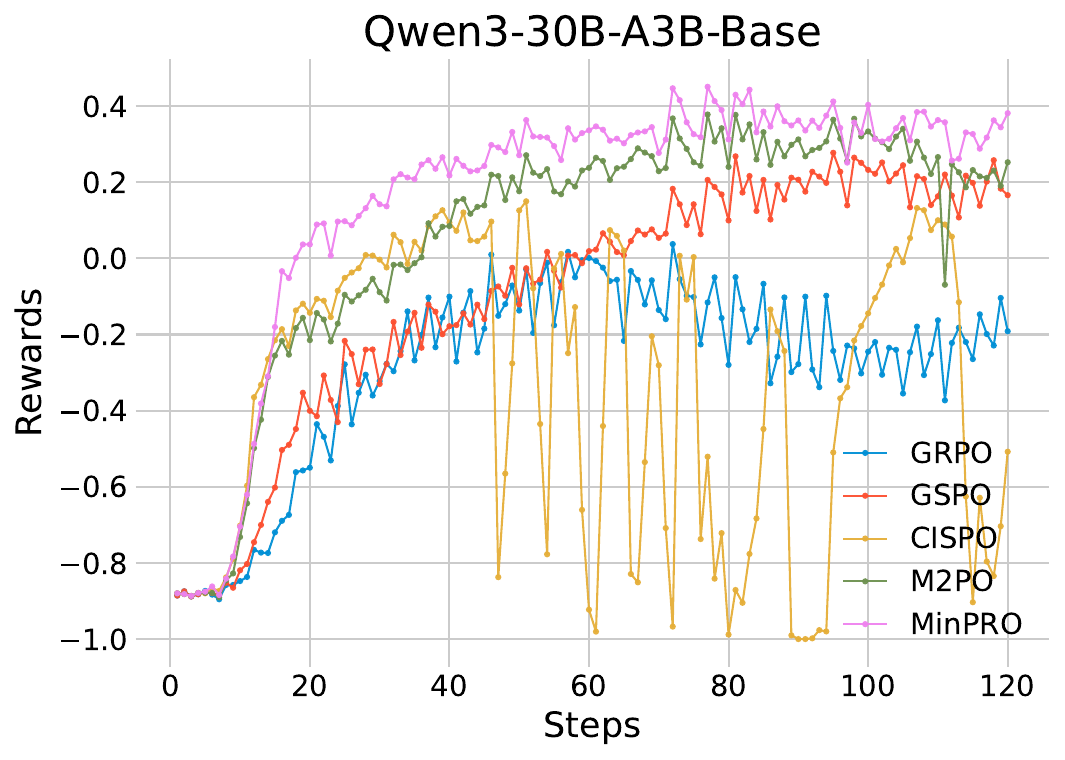}
}
\hfill
\subfigure[Average \texttt{pass@1} Scores vs. Steps]{
\label{figure:30b-avg}
\includegraphics[width=0.48\columnwidth]{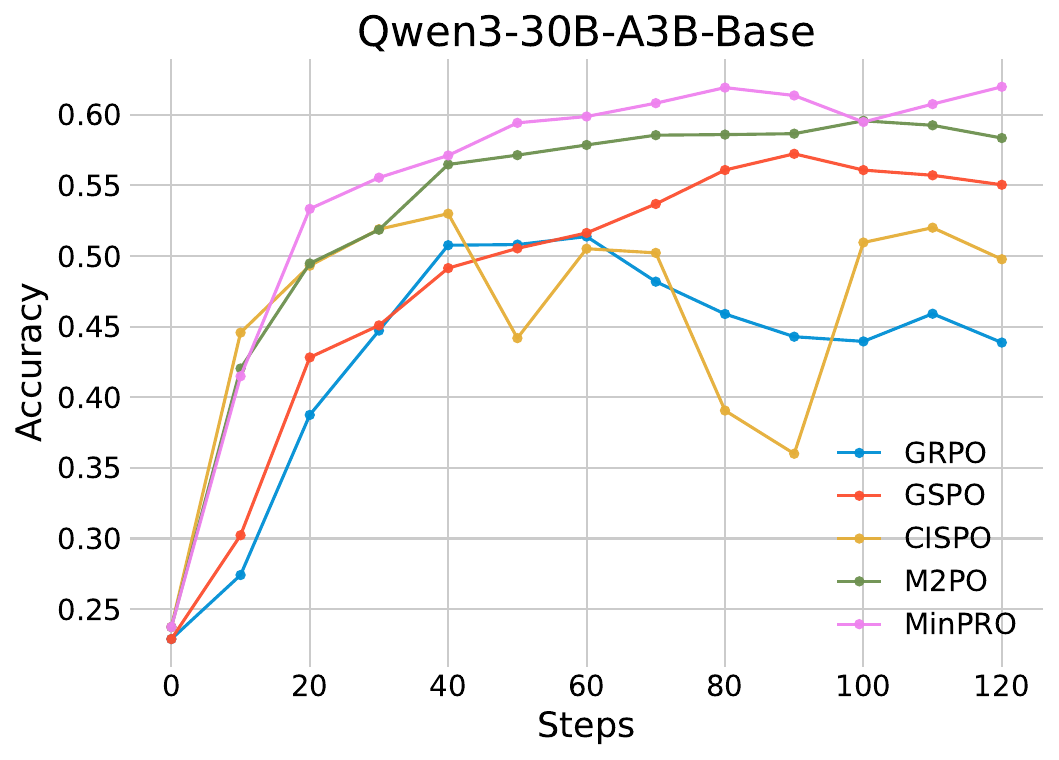}
}
\caption{(a) Training reward curves and (b) average \texttt{pass@1} scores as functions of training steps for Qwen3-30B-A3B-Base under off-policy optimization.}
\label{figure:30b}
\end{figure}

\subsection{Main Results}
\textbf{Optimization Stability \ \ } To show the training stability of different LLM RL algorithms, we plot their training reward curves, where the reward at each step is computed as the average reward over the training batch. This quantity plays a role analogous to the (negative) training loss in supervised learning, and a steadily increasing reward curve therefore indicates a stable optimization process. As shown in \Cref{figure:rewards}, we report training reward curves under the large off-policy regime for both 8B and 14B LLMs. In this setting, GRPO, GSPO, and CISPO exhibit highly unstable reward dynamics, while MinPRO maintains consistently higher and more stable rewards throughout training than all other baselines, including the recent M2PO method. This suggests that MinPRO can support stable long-horizon training even under severe off-policy conditions.

\vspace{1mm}

\textbf{Pass@1 Results \ \ } \texttt{pass@1} scores for AMC23, AIME24, and AIME25 are measured using $128$ sampled generations per prompt, while the other four larger benchmarks are evaluated using $2$ generations per prompt. The results are summarized in \Cref{tab:pass1-s2}. As shown, (1) GRPO exhibits inferior performance due to its unstable training dynamics under large off-policy drift; and (2) MinPRO achieves the highest average \texttt{pass@1} scores, outperforming all other baselines by at least $0.5$ points on both the 8B and 14B models. These results demonstrate the superior effectiveness of MinPRO in off-policy optimization. 

\vspace{1mm}

\textbf{Pass@k Results \ \ } To further assess performance beyond \texttt{pass@1}, we report average \texttt{pass@k} scores on AMC23, AIME24, and AIME25 in \Cref{tab:passk avg}. MinPRO achieves the best overall \texttt{pass@k} performance on both Qwen3-8B-Base and Qwen3-14B-Base, outperforming all baselines by at least $1$ point. We also provide detailed \texttt{pass@k} results for each individual dataset in \Cref{tab:passk-s2} in Appendix~\ref{app:tables}.

\subsection{Scaling to Large MoE LLMs}

To further assess the effectiveness of MinPRO, we additionally conduct post-training on the widely used MoE LLM Qwen3-30B-A3B-Base, which contains $30$ billion total parameters while activating $3$ billion parameters during inference. We also perform off-policy training by setting the data staleness to $2$. The results are summarized in \Cref{figure:30b}. As shown in \Cref{figure:30b-rewards}, MinPRO maintains consistently higher and more stable training rewards than all other baselines throughout training, consistent with the trends observed on the 8B and 14B dense models. For evaluation, we report the averaged \texttt{pass@1} accuracy across the seven benchmarks over the course of training in \Cref{figure:30b-avg}, with detailed per-dataset curves provided in \Cref{figure:30b pre-data scores} in Appendix \ref{app:tables}. MinPRO again achieves superior and more stable performance compared to all baseline methods. These results demonstrate that MinPRO scales effectively to large MoE models and provides stable policy optimization.

\section{Discussion}

Based on the empirical results presented above, we discuss why off-policy instability can be alleviated by M2PO and MinPRO, respectively. We then summarize several unsuccessful attempts encountered during the development of MinPRO, providing further insight into the design choices of our method.

\begin{figure}[t]
\centering

\includegraphics[width=0.325\columnwidth]{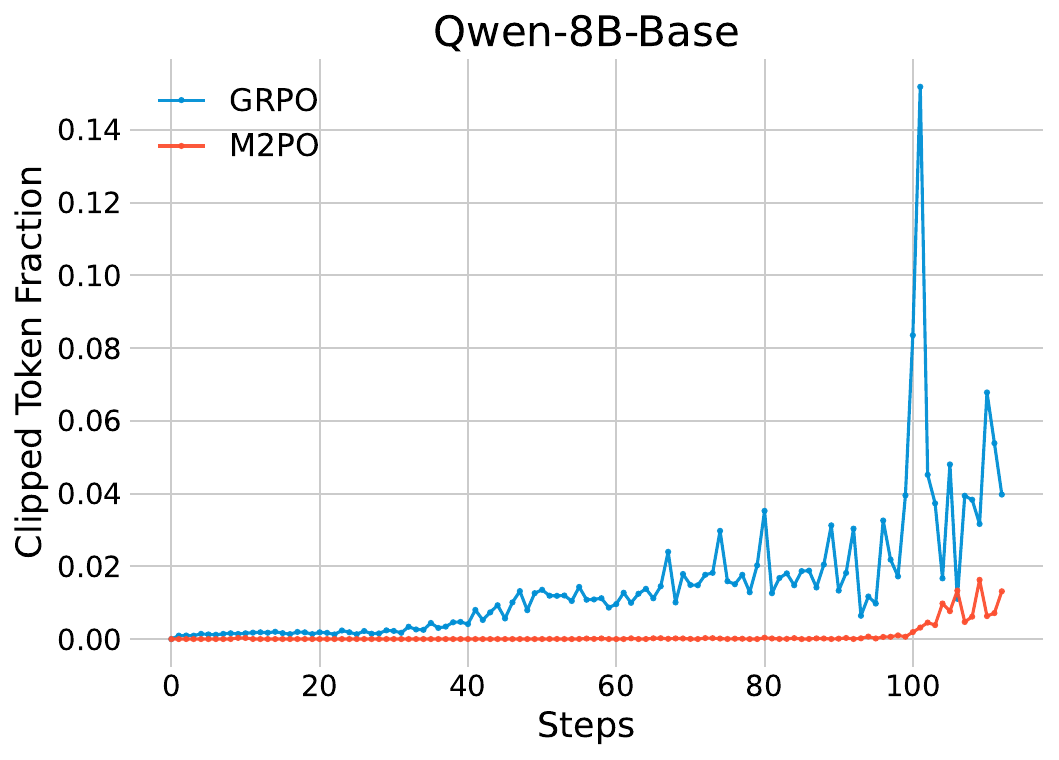}
\hfill
\includegraphics[width=0.325\columnwidth]{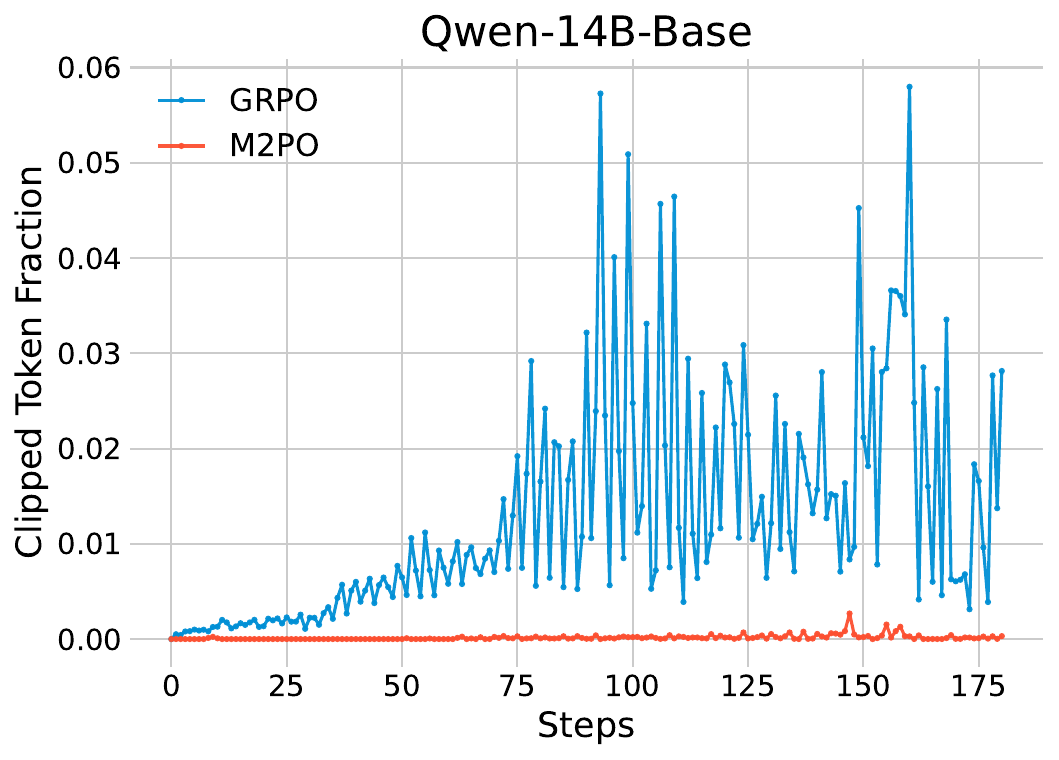}
\hfill
\includegraphics[width=0.325\columnwidth]{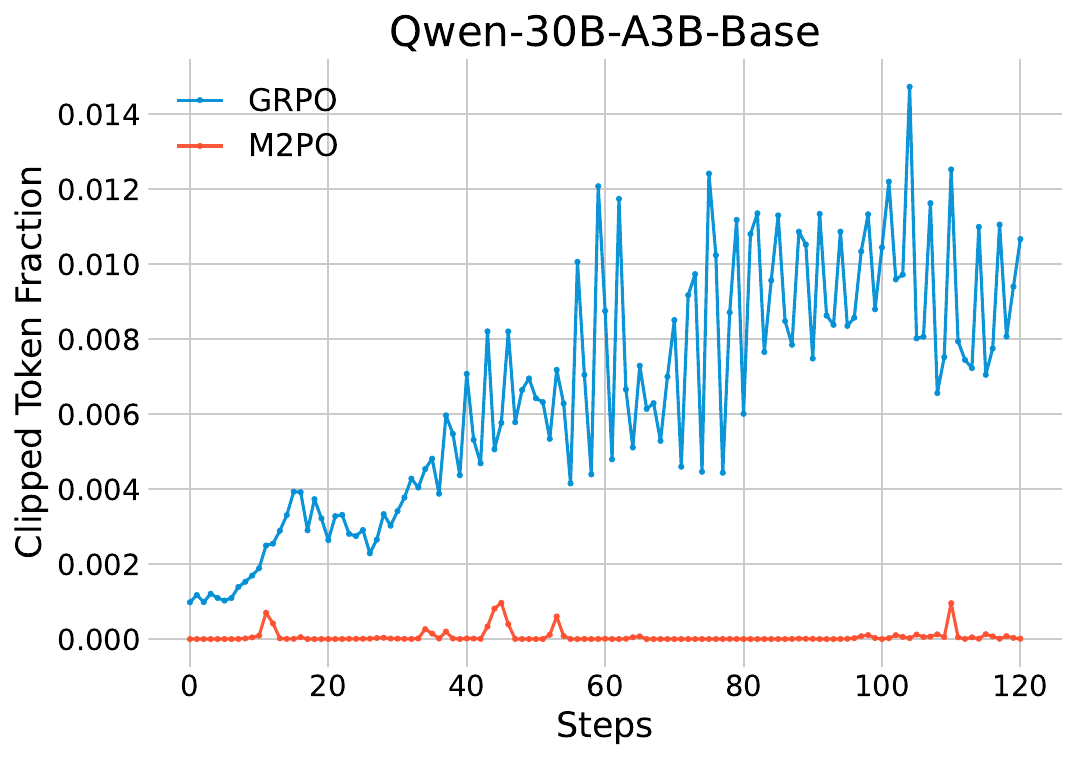}
\caption{Hard-clipped token fraction as a function of training steps for GRPO and M2PO under off-policy optimization.}
\label{figure:clipfrac}
\end{figure}

\subsection{Post-hoc Analysis}

We provide a post-hoc analysis to better understand how LLM RL post-training can be stabilized under large off-policy regimes. To enable clearer attribution, our analysis focuses on token-level ratio–based methods and excludes GSPO, although empirical results indicate that GSPO also suffers from training instability under off-policy drift.

\textbf{Different Failure Modes of GRPO and CISPO \ }
As shown in \Cref{figure:training collapse,figure:rewards}, GRPO and CISPO exhibit qualitatively different failure behaviors under off-policy optimization. GRPO tends to converge to a low but relatively stable reward plateau, whereas CISPO achieves higher peak rewards but frequently suffers from abrupt training collapse. This contrast can be attributed to their different treatments of extreme token-level importance ratios. GRPO employs hard clipping, discarding token gradients whose ratios fall outside a predefined trust region controlled by $(\epsilon_{\text{low}}, \epsilon_{\text{high}})$. In contrast, CISPO adopts soft clipping, retaining all token gradients while only constraining their magnitudes. Below, we analyze why GRPO and CISPO fail under large off-policy drift, and how M2PO and MinPRO mitigate these issues, respectively.

\textbf{Relaxed Hard Clipping in M2PO \ }
In large off-policy settings, where $\pi_\theta$ and $\pi_{\theta_{\mathrm{old}}}$ differ substantially, token-level importance ratios $\rho_t$ are more likely to deviate far from one and fall outside the trust region. For hard-clipping methods such as GRPO, clipping hyperparameters that are effective in near on-policy regimes thus become overly conservative, discarding a large fraction of informative token gradients. This leads to under-updating and explains the low reward peak observed when GRPO is applied in off-policy training. M2PO partially addresses this issue by relaxing the trust region and reducing the clipping rate. As shown in \Cref{figure:clipfrac}, M2PO clips significantly fewer tokens than GRPO, allowing more informative gradients to contribute to optimization and thereby achieving higher peak performance.

\textbf{Prefix-Ratio Correction in MinPRO for Soft Clipping \ }
Soft-clipping methods such as CISPO avoid under-updating by retaining all token gradients, which enables higher peak rewards. However, this same property makes them vulnerable to instability under large off-policy drift. In such regimes, extreme high-ratio tokens appear more frequently, and constraining only gradient magnitudes is insufficient to suppress their influence. Consequently, gradient updates can become dominated by a small number of extreme tokens, leading to sudden training collapse. MinPRO addresses this failure mode by incorporating prefix-level information through the minimum prefix ratio $\underline{\rho}_t$. Under larger off-policy drift, the minimization operation naturally yields smaller values of $\underline{\rho}_t$, which more aggressively down-weights tokens with large token ratios $\rho_t$. This prefix-aware correction provides a principled mechanism for suppressing overly influential high-ratio tokens while preserving informative gradients, thereby achieving both stability and strong performance during off-policy training.

\subsection{Unsuccessful Attempts}

During the development of MinPRO, we also encountered several failures and setbacks. We share these observations to shed light on the challenges involved, while noting that these strategies may still be effective under different training settings.

\textbf{Direct Use of the Prefix Ratio \ \ } We attempted to replace the token-level ratio $\rho_t$ with the full prefix importance ratio $\rho_{1:t}$ in CISPO, while still employing soft clipping to constrain extreme values. However, empirical results did not show improvements when using $\rho_{1:t}$ instead of $\rho_t$. This suggests that the large variance and length bias introduced by the prefix importance ratio cannot be overlooked in LLM optimization.

\vspace{1mm}

\textbf{Indirect Use of the Prefix Ratio \ \ } Beyond directly incorporating the vanilla prefix ratio into the objective, we also explored using $\rho_{1:t}$ indirectly as a token-filtering criterion. Specifically, we removed the lowest $1\%$ of tokens ranked by $\rho_{1:t}$, corresponding to tokens that are highly unlikely to be sampled under the current policy and therefore unlikely to provide meaningful optimization signals. In our experiments, however, the policy failed to make meaningful progress under this setting. We attribute this failure to the inherent length bias of the prefix ratio: extremely small values of $\rho_{1:t}$ tend to occur near the end of the generated sequence, where token choices are crucial for producing the correct final answer. Filtering based on $\rho_{1:t}$ therefore disproportionately removes informative tail tokens, ultimately hindering training performance.

\section{Conclusion}
In this paper, we address the challenge of unstable optimization when training LLMs under off-policy regimes. By revisiting the policy gradient formulation under off-policy conditions, we show that the theoretically rigorous correction term is the prefix importance ratio rather than the commonly used token-level ratio. With this insight, we propose MinPRO, a simple yet effective objective that incorporates prefix-level information through a stable surrogate of cumulative token ratios. MinPRO avoids the numerical instability and length bias of full prefix cumulative products while preserving essential correction signals. Extensive experiments on multiple dense and MoE LLMs across a range of mathematical reasoning benchmarks demonstrate that MinPRO consistently improves both optimization stability and peak performance compared to strong baselines.

\bibliographystyle{unsrtnat}   
\bibliography{ref}
\newpage
\appendix

\section{Proof of \Cref{thm:pgt}}
\label{app:proof}
We consider an autoregressive policy $\pi_\theta$ over responses (trajectories) $\bm{o} = (o_1,\dots,o_T)$ with total return
$R(\bm{o}) = \sum_{t=1}^T r_t$.
The standard RL objective is the expected cumulative reward
\begin{equation}
    \mathcal{J}(\theta) \;=\; \mathbb{E}_{\bm{o} \sim \pi_\theta}[R(\bm{o})]. \nonumber
\end{equation}
Using the log–derivative trick (REINFORCE),
\begin{align}
    \nabla_\theta \mathcal{J}
    &= \nabla_\theta \mathbb{E}_{\bm{o}\sim\pi_\theta}[R(\bm{o})]
      = \mathbb{E}_{\bm{o}\sim\pi_\theta}\big[\nabla_\theta \log p_\theta(\bm{o})\,R(\bm{o})\big] \nonumber \\
    &= \mathbb{E}_{\bm{o}\sim\pi_\theta}\left[
        \sum_{t=1}^T \nabla_\theta \log \pi_\theta(o_t \mid \bm{o}_{<t})\,R(\bm{o})
      \right] \nonumber \\
    &= \sum_{t=1}^T 
       \underbrace{\mathbb{E}_{\bm{o}\sim\pi_\theta}
       \left[\nabla_\theta \log \pi_\theta(o_t \mid \bm{o}_{<t})\,R(\bm{o})\right]}_{=:G_t}. \nonumber
\end{align}

Let $\bm{o}_{\leq t} := (o_1,\dots,o_t)$ denote the prefix up to step $t$.
For a fixed $t$, we now rewrite $G_t$ by conditioning on $\bm{o}_{\leq t}$:
\begin{align}
    G_t
    &= \mathbb{E}_{\bm{o}_{\leq t}\sim\pi_\theta}
       \Big[
          \mathbb{E}_{\bm{o}_{t+1:T}\sim\pi_\theta(\cdot\mid \bm{o}_{\leq t})}
          \big[
             \nabla_\theta \log \pi_\theta(o_t \mid \bm{o}_{<t})\,R(\bm{o})
          \,\big|\, \bm{o}_{\leq t}
          \big]
       \Big] 
       && \text{(law of total expectation)} \nonumber \\
    &= \mathbb{E}_{\bm{o}_{\leq t}\sim\pi_\theta}
       \Big[
          \nabla_\theta \log \pi_\theta(o_t \mid \bm{o}_{<t})\;
          \mathbb{E}_{\bm{o}_{t+1:T}\sim\pi_\theta(\cdot\mid \bm{o}_{\leq t})}
          \big[
             R(\bm{o})
          \,\big|\, \bm{o}_{\leq t}
          \big] 
       \Big]. \nonumber
\end{align}

We decompose the return into past and future parts,
\begin{equation}
    R(\bm{o})
    = \underbrace{\sum_{i=1}^{t-1} r_i}_{R_{<t}(\bm{o})}
      + \underbrace{\sum_{i=t}^{T} r_i}_{R_{\ge t}(\bm{o})}. \nonumber
\end{equation}
Given $\bm{o}_{\leq t}$, the past return $R_{<t}$ is deterministic, while the future return $R_{\ge t}$ is random. The state-action value is defined as
\begin{equation}
    Q^\pi(o_t;\bm{o}_{<t})
    := \mathbb{E}_{\bm{o}_{{t+1}:T}\sim\pi_\theta}
       \Big[ R_{\ge t}(\bm{o}) \,\big|\, \bm{o}_{\leq t}\Big]. \nonumber
\end{equation}
Then
\begin{align}
    \mathbb{E}_{\bm{o}_{t+1:T}\sim\pi_\theta(\cdot\mid \bm{o}_{\leq t})}
    \big[R(\bm{o})\mid \bm{o}_{\leq t}\big]
    &= R_{<t}(\bm{o}) + Q^\pi(o_t;\bm{o}_{<t}), \nonumber
\end{align}
and hence 
\begin{align}
    G_t
    &= \mathbb{E}_{\bm{o}_{\leq t}\sim\pi_\theta}
       \Big[
          \nabla_\theta \log \pi_\theta(o_t \mid \bm{o}_{<t})\,
          \big(R_{<t}(\bm{o}) + Q^\pi(o_t;\bm{o}_{<t})\big)
       \Big]. \nonumber
\end{align}

We invoke the following baseline invariance lemma to remove terms that do not depend on the action.
\begin{lemma}[Score-function baseline invariance]
\label{lem:baseline}
Let $b(\bm{o}_{\leq t})$ be any function that depends on $\bm{o}_{<t}$ but not on the current action
$o_t$. Then
\begin{equation}
    \mathbb{E}_{\bm{o}_{\leq t}\sim\pi_\theta}
    \Big[
        \nabla_\theta \log \pi_\theta(o_t \mid \bm{o}_{<t})\,b(\bm{o}_{\leq t})
    \Big] = 0. \nonumber
\end{equation}
\end{lemma}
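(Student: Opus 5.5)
The plan is to condition on the prefix $\bm{o}_{<t}$ and show that the inner expectation over $o_t$ vanishes. By the tower rule,
\[
\mathbb{E}_{\bm{o}_{\leq t}\sim\pi_\theta}\big[\nabla_\theta \log \pi_\theta(o_t\mid\bm{o}_{<t})\,b(\bm{o}_{\leq t})\big]
=\mathbb{E}_{\bm{o}_{<t}\sim\pi_\theta}\Big[\mathbb{E}_{o_t\sim\pi_\theta(\cdot\mid\bm{o}_{<t})}\big[\nabla_\theta \log \pi_\theta(o_t\mid\bm{o}_{<t})\,b(\bm{o}_{\leq t})\,\big|\,\bm{o}_{<t}\big]\Big].
\]
Since $b$ does not depend on $o_t$, it is a function of $\bm{o}_{<t}$ alone, and once $\bm{o}_{<t}$ is fixed it is a constant. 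We can therefore pull it out of the inner expectation, leaving the factor $\mathbb{E}_{o_t\sim\pi_\theta(\cdot\mid\bm{o}_{<t})}[\nabla_\theta\log\pi_\theta(o_t\mid\bm{o}_{<t})]$.

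The key step is to show that this conditional expected score is zero. Because the vocabulary $\mathcal{V}$ is finite, the expectation is a finite sum, and the log-derivative identity $\pi_\theta\nabla_\theta\log\pi_\theta=\nabla_\theta\pi_\theta$ gives
\[
\sum_{v\in\mathcal{V}}\pi_\theta(v\mid\bm{o}_{<t})\nabla_\theta\log\pi_\theta(v\mid\bm{o}_{<t})=\sum_{v\in\mathcal{V}}\nabla_\theta\pi_\theta(v\mid\bm{o}_{<t})=\nabla_\theta\sum_{v\in\mathcal{V}}\pi_\theta(v\mid\bm{o}_{<t})=\nabla_\theta 1=0.
\]
Since the sum is finite, interchanging gradient and summation needs no justification. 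The inner expectation is then $b(\bm{o}_{<t})\cdot 0=0$ for every prefix, so the outer expectation vanishes as well.

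The only point needing care is that the log-score is defined only where $\pi_\theta(v\mid\bm{o}_{<t})>0$. This is the main technical obstacle, and it is handled by the softmax parameterization: every token has strictly positive probability, so the log-derivative identity holds termwise. Without that assumption, one would restrict the sum to the support and note that zero-probability terms contribute nothing.

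With the lemma established, the proof of \Cref{thm:pgt} concludes as follows. Apply the lemma with $b=R_{<t}$, and also with the value baseline $V^\pi(\bm{o}_{<t})$. This replaces $R_{<t}+Q^\pi$ by $A^\pi=Q^\pi-V^\pi$.

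\Cref{corollary:main} then follows by a change of measure on the prefix marginal. For any function $f$ of $\bm{o}_{\leq t}$,
\[
\mathbb{E}_{\bm{o}_{\leq t}\sim\pi_\theta}[f]=\mathbb{E}_{\bm{o}_{\leq t}\sim\pi_{\theta_{\mathrm{old}}}}\big[\rho_{1:t}f\big],
\]
where $\rho_{1:t}=\mathbb{P}_\theta(\bm{o}_{\leq t})/\mathbb{P}_{\theta_{\mathrm{old}}}(\bm{o}_{\leq t})$. By the autoregressive chain rule this ratio factorizes as $\prod_{i\le t}\rho_i$. The identity requires $\pi_{\theta_{\mathrm{old}}}$ to be positive wherever $\pi_\theta$ is, which again holds under softmax.
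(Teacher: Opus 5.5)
Your proof is correct and follows essentially the same route as the paper's: apply the tower rule over $\bm{o}_{<t}$, pull the action-independent $b$ out of the inner expectation, and use the score-function identity $\sum_{o_t}\nabla_\theta\pi_\theta(o_t\mid\bm{o}_{<t})=\nabla_\theta 1=0$. Your remarks on the finite vocabulary and softmax positivity, and your sketch of how the lemma feeds into \Cref{thm:pgt} and \Cref{corollary:main}, are harmless extras that the paper leaves implicit.
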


\begin{proof}
By the tower rule and the definition of conditional expectation,
\begin{align}
    \mathbb{E}_{\bm{o}_{\leq t}\sim\pi_\theta}
    \big[
        \nabla_\theta \log \pi_\theta(o_t \mid \bm{o}_{<t})\,b(\bm{o}_{\leq t})
    \big]
    &= \mathbb{E}_{\bm{o}_{<t}\sim\pi_\theta}
       \Big[
           b(\bm{o}_{\leq t})\;
           \mathbb{E}_{o_t\sim\pi_\theta(\cdot\mid \bm{o}_{<t})}
           \big[
               \nabla_\theta \log \pi_\theta(o_t \mid \bm{o}_{<t})
           \big]
       \Big]. \nonumber
\end{align} 
For any fixed $\bm{o}_{< t}$, the inner expectation is the score-function identity:
\begin{equation}
    \mathbb{E}_{o_t\sim\pi_\theta(\cdot\mid \bm{o}_{<t})}
    \big[
        \nabla_\theta \log \pi_\theta(o_t \mid \bm{o}_{<t})
    \big]
    = \nabla_\theta
      \sum_{o_t} \pi_\theta(o_t \mid \bm{o}_{<t})
    = \nabla_\theta 1
    = 0. \nonumber
\end{equation}
Thus the outer expectation is also zero, which proves the lemma.
\end{proof}

Applying \Cref{lem:baseline} with $b(\bm{o}_{\leq t}) = R_{<t}(\bm{o})$ yields
\begin{align}
    G_t
    &= \mathbb{E}_{\bm{o}_{\leq t}\sim\pi_\theta}
       \Big[
          \nabla_\theta \log \pi_\theta(o_t \mid \bm{o}_{<t})\,Q^\pi(o_t;\bm{o}_{<t})
       \Big]. \nonumber
\end{align}
Recall that in RL the action value can be decomposed as $Q^\pi(o_t;\bm{o}_{<t}) = V^\pi(\bm{o}_{<t}) + A^\pi(o_t;\bm{o}_{<t})$, applying \Cref{lem:baseline} once more with the baseline choice $b(\bm{o}_{\leq t}) = V^\pi(\bm{o}_{<t})$,
we obtain
\begin{align}
    G_t
    &= \mathbb{E}_{\bm{o}_{\leq t}\sim\pi_\theta}
       \Big[
          \nabla_\theta \log \pi_\theta(o_t \mid \bm{o}_{<t})\,A^\pi(o_t;\bm{o}_{<t})
       \Big]. \nonumber
\end{align}
Summing over $t$ gives the on-policy policy gradient
\begin{equation}
    \nabla_\theta \mathcal{J}
    = \sum_{t=1}^T
      \mathbb{E}_{\bm{o}_{\leq t}\sim\pi_\theta}
      \Big[
          \nabla_\theta \log \pi_\theta(o_t \mid \bm{o}_{<t})\,A^\pi(o_t;\bm{o}_{<t})
      \Big]. \nonumber
\end{equation}

\section{Implementation Details}
\label{app:implementation details}
The pre-trained LLMs can be download via \url{https://huggingface.co/Qwen}. The training dataset DAPO-Math-17K is available at \url{https://huggingface.co/datasets/BytedTsinghua-SIA/DAPO-Math-17k}, and the evaluation datasets can be download on \url{https://huggingface.co/math-ai}. \Cref{tab:hyperparams} summarizes the key hyperparameters used in the RL algorithms. Experiments on Qwen3-8B-Base and Qwen3-14B-Base are conducted using FSDP training \citep{zhao2023pytorch}, and experiments on Qwen3-30B-A3B-Base employ Megatron \citep{megatron-lm} for distributed training. All experiments use vLLM \citep{kwon2023efficient} as the inference engine.

\begin{table}[h]
\centering
\small
\setlength{\tabcolsep}{6pt}
\renewcommand{\arraystretch}{1.1}
\caption{Key hyperparameters in RL algorithms. ``—'' denotes not used.}
\label{tab:hyperparams}
\begin{tabular}{lccccc}
\toprule
\textbf{Category} & \textbf{GRPO} & \textbf{GSPO} & \textbf{CISPO} & \textbf{M2PO} & \textbf{MinPRO} \\
\midrule
\multicolumn{5}{l}{\textit{Sampling and Validation}} \\
\quad Temperature & $1.0$ & $1.0$ & $1.0$ & $1.0$ & $1.0$ \\
\quad Top-$p$ / Val Top-$p$ & $1.0$ / $0.7$ & $1.0$ / $0.7$ & $1.0$ / $0.7$ & $1.0$ / $0.7$ & $1.0$ / $0.7$ \\
\midrule
\multicolumn{5}{l}{\textit{Clipping}} \\
\quad Clip ratio (low / high) & $0.2$ / $0.28$ & $2e-3$ / $2e-3$ &  $1$ / $4$ & — & $1$ / $4$ \\
\midrule
\multicolumn{5}{l}{\textit{Sequence Limits}} \\
\quad Max prompt / response len & $2048$ / $20480$ & $2048$ / $20480$ & $2048$ / $20480$ & $2048$ / $20480$ & $2048$ / $20480$ \\
\quad Overlong buffer (on/off) & off & off & off & off & off \\
\midrule
\multicolumn{5}{l}{\textit{Batching}} \\
\quad Train batch / mini-batch size & $512$ / $32$ & $512$ / $32$ & $512$ / $32$ & $512$ / $32$ & $512$ / $32$ \\
\quad Responses per prompt & $8$ & $8$ & $8$ & $8$ & $8$ \\
\midrule
\multicolumn{5}{l}{\textit{Optimization}} \\
\quad Loss aggregation & \texttt{token-mean} & \texttt{seq-mean} & \texttt{token-mean} & \texttt{token-mean} & \texttt{token-mean} \\
\quad Actor LR & $1e-6$ & $1e-6$ & $1e-6$ & $1e-6$ & $1e-6$ \\
\quad Critic LR & — & — & — & — & — \\
\quad LR warmup steps & $10$ & $10$ & $10$ & $10$ & $10$ \\
\quad Critic warmup steps & — & — & — & — & — \\
\bottomrule
\end{tabular}
\end{table}

\newpage

\section{Additional Experimental Results}
\label{app:tables}

\begin{table}[ht]
\centering
\caption{Per-dataset \texttt{pass@k} scores on AMC23, AIME24, and AIME25 under large off-policyness.}
\label{tab:passk-s2}
\begin{tabular}{cc ccccccc  ccc}
\toprule
 & \multirow{2}{*}{Method} &  \multirow{2}{*}{Model} & \multicolumn{8}{c}{\texttt{Pass@k}} & \multirow{2}{*}{Avg} \\ 
\cmidrule(lr){4-11}
 & & & $1$ & $2$ & $4$ & $8$ & $16$ & $32$ & $64$ & $128$ \\ 
\specialrule{1pt}{0.2\jot}{0.15pc}
\cellcolor{blue!15} & {GRPO} & \multirow{5}{*}{8B} & $80.7$ & $88.6$ & $92.4$ & $94.2$ & $95.1$ & $95.6$ & $96.0$ & $96.6$ & $92.4$  \\
\cellcolor{blue!15} & {GSPO} &  & $82.4$ & $89.6$ & $92.9$ & $94.5$ & $95.5$ & $96.3$ & $96.9$ & $97.3$ & $93.2$  \\
\cellcolor{blue!15} & {CISPO} &   & \bm{$82.5$} & \bm{$90.6$} & \bm{$94.1$} & \bm{$95.2$} & \bm{$95.8$} & \bm{$96.4$} & \bm{$96.9$} & \bm{$97.4$} & \bm{$93.6$}  \\
\cellcolor{blue!15} & {M2PO} &   & $81.2$ & $89.0$ & $92.6$ & $94.1$ & $94.7$ & $95.0$ & $95.0$ & $95.0$ & $92.1$  \\
\cellcolor{blue!15} & {MinPRO} &   & \bm{$82.5$} & {$90.1$} & $93.7$ & $95.0$ & $95.5$ & $95.9$ & $96.6$ & $97.2$ & $93.3$  \\
\noalign{\vskip -2.6pt}  
\cmidrule(lr){2-12}
\noalign{\vskip -2.6pt}
\cellcolor{blue!15} & {GRPO} & \multirow{5}{*}{14B} & $67.0$ & $78.3$ & $86.2$ & $91.0$ & $94.2$ & $96.2$ & $97.2$ & $97.5$ & $88.5$ \\
\cellcolor{blue!15} & {GSPO} &  & $80.8$ & $89.2$ & $93.5$ & $95.5$ & $96.7$ & $98.2$ & $99.3$ & $99.9$ & $94.1$\\
\cellcolor{blue!15} & {CISPO} &   & $85.7$ & $91.9$ & $94.0$ & $95.2$ & $96.4$ & $97.4$ & $98.4$ & $99.1$ & $94.8$\\
\cellcolor{blue!15} & {M2PO} &   & $85.7$ & $91.5$ & $94.2$ & \bm{$95.9$} & \bm{$97.1$} & \bm{$97.9$} & \bm{$98.5$} & \bm{$99.1$} & \bm{$95.0$} \\
\multirow{-9.8}{*}{\cellcolor{blue!15}\rotatebox{90}{AMC23}} & {MinPRO} &  & \bm{$87.5$} & \bm{$92.3$} & \bm{$94.3$} & $95.3$ & $95.8$ & $96.6$ & $97.6$ & $98.7$ & $94.7$\\
\midrule
\cellcolor{blue!15} & {GRPO} & \multirow{5}{*}{8B} & $33.1$ & $42.9$ & $52.4$ & $60.6$ & $66.6$ & $71.8$ & $75.6$ & \bm{$79.1$} & $60.3$  \\
\cellcolor{blue!15} & {GSPO} &  & $37.9$ & $48.0$ & $58.8$ & $67.0$ & $71.7$ & $74.0$ & $75.8$ & $77.4$ & $63.8$  \\
\cellcolor{blue!15} & {CISPO} &   & $36.1$ & $46.0$ & $55.9$ & $64.0$ & $69.2$ & $72.3$ & $74.7$ & $77.1$ & $61.9$  \\
\cellcolor{blue!15} & {M2PO} &   & $37.8$ & $46.7$ & $54.9$ & $60.9$ & $65.5$ & $68.6$ & $70.6$ & $72.0$ & $59.6$  \\
\cellcolor{blue!15} & {MinPRO} &   & \bm{$37.9$} & \bm{$48.6$} & \bm{$59.0$} & \bm{$66.3$} & \bm{$70.8$} & \bm{$73.7$} & \bm{$76.3$} & $78.3$ & \bm{$63.9$}  \\
\noalign{\vskip -2.6pt} 
\cmidrule(lr){2-12}
\noalign{\vskip -2.6pt}
\cellcolor{blue!15} & {GRPO} & \multirow{5}{*}{14B} & $26.2$ & $35.1$ & $43.6$ & $51.1$ & $57.0$ & $60.6$ & $63.1$ & $65.0$ & $50.2$\\
\cellcolor{blue!15} & {GSPO} &  & $44.1$ & {$55.4$} & \bm{$63.9$} & \bm{$69.6$} & \bm{$73.1$} & {$75.1$} & {$76.2$} & {$76.6$} & {$66.8$}\\
\cellcolor{blue!15} & {CISPO} &   & $45.3$ & $55.2$ & $62.4$ & $68.0$ & $72.0$ & $75.3$ & $78.1$ & $80.5$ & $67.1$\\
\cellcolor{blue!15} & {M2PO} &   & $46.3$ & $54.4$ & $59.9$ & $65.1$ & $69.9$ & $73.9$ & $76.9$ & $78.6$ & $65.6$\\
\multirow{-9.8}{*}{\cellcolor{blue!15}\rotatebox{90}{AIME24}} & {MinPRO} &  & \bm{$47.0$} & \bm{$56.6$} & $63.5$ & $68.4$ & $72.6$ & \bm{$75.8$} & \bm{$78.9$} & \bm{$81.5$} & \bm{$68.0$}\\
\midrule
\cellcolor{blue!15} & {GRPO} & \multirow{5}{*}{8B} & $26.2$ & $32.5$ & $39.6$ & $46.0$ & $50.9$ & $55.3$ & $59.7$ & $63.7$ & $46.7$  \\
\cellcolor{blue!15} & {GSPO} &  & $29.0$ & $34.6$ & $40.1$ & $44.7$ & $48.2$ & $51.7$ & $55.8$ & $60.4$ & $45.6$  \\
\cellcolor{blue!15} & {CISPO} &   & $27.4$ & $32.2$ & $37.0$ & $42.0$ & $46.9$ & $51.5$ & $56.4$ & $61.4$ & $44.4$  \\
\cellcolor{blue!15} & {M2PO} &   & $25.4$ & $30.1$ & $35.1$ & $40.1$ & $45.9$ & $51.4$ & $56.7$ & $60.8$ & $43.2$  \\
\cellcolor{blue!15} & {MinPRO} &   & \bm{$29.5$} & \bm{$35.6$} & \bm{$41.4$} & \bm{$46.5$} & \bm{$51.6$} & \bm{$56.9$} & \bm{$61.7$} & \bm{$65.8$} & \bm{$48.6$}  \\
\noalign{\vskip -2.6pt} 
\cmidrule(lr){2-12}
\noalign{\vskip -2.6pt}
\cellcolor{blue!15} & {GRPO} & \multirow{5}{*}{14B} & $23.1$ & $28.4$ & $34.2$ & $40.0$ & $45.8$ & $52.1$ & $58.1$ & $63.0$ & $43.1$ \\
\cellcolor{blue!15} & {GSPO} &  & {$31.8$} & {$39.9$} & \bm{$46.6$} & {$51.6$} & {$55.9$} & {$59.6$} & $63.1$ & $66.6$ & {$51.9$}\\
\cellcolor{blue!15} & {CISPO} &   & $32.7$ & $38.7$ & $44.5$ & $50.5$ & $56.5$ & $62.3$ & $67.2$ & $70.7$ & $52.9$\\
\cellcolor{blue!15} & {M2PO} &   & $32.0$ & $37.5$ & $43.1$ & $48.4$ & $53.5$ & $58.2$ & $63.4$ & $68.7$ & $50.6$\\
\multirow{-9.8}{*}{\cellcolor{blue!15}\rotatebox{90}{AIME25}} & {MinPRO} &  & \bm{$33.4$} & \bm{$40.0$} & $46.3$ & \bm{$52.2$} & \bm{$58.0$} & \bm{$64.3$} & \bm{$70.7$} & \bm{$76.0$} & \bm{$55.1$}\\
\bottomrule
\end{tabular}
\end{table}

\begin{figure*}[t]
\centering
\includegraphics[width=0.32\columnwidth]{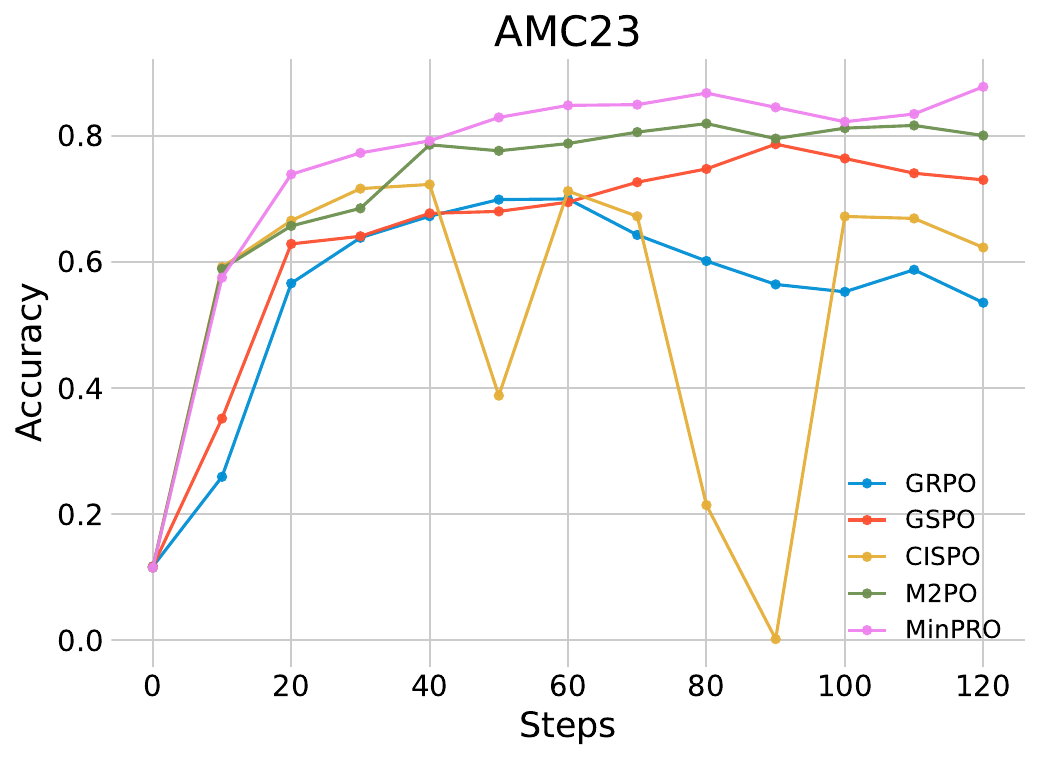}
\includegraphics[width=0.32\columnwidth]{figures/30b-aime24.pdf}
\includegraphics[width=0.32\columnwidth]{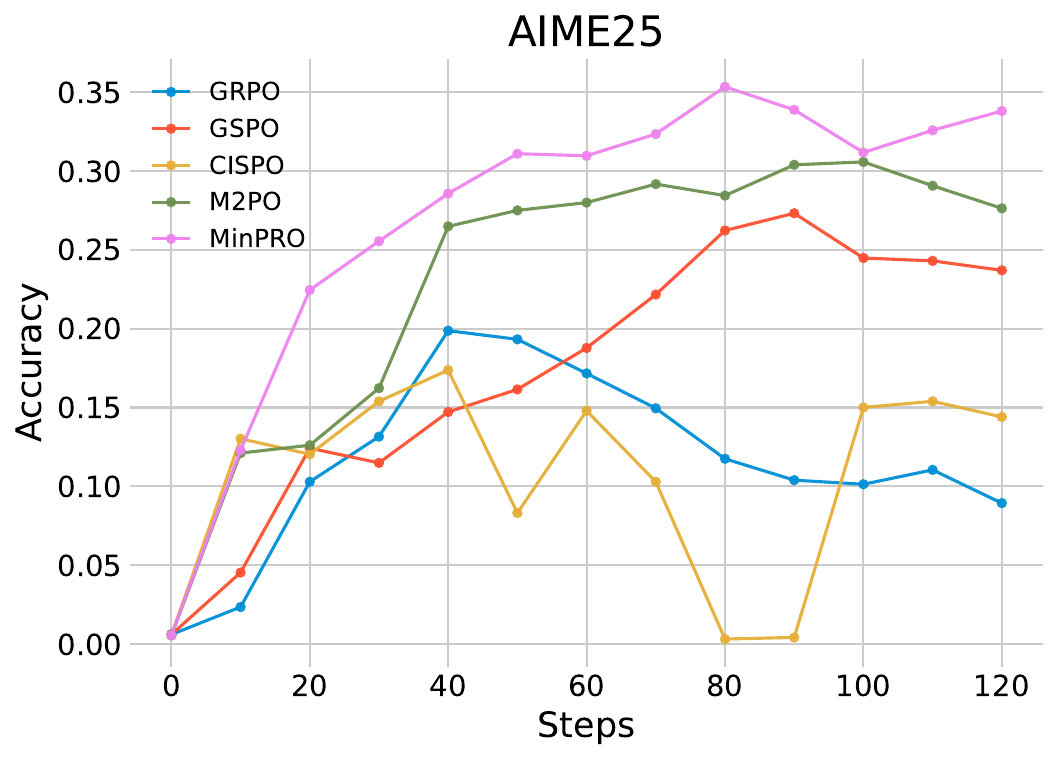}

\hfill

\includegraphics[width=0.32\columnwidth]{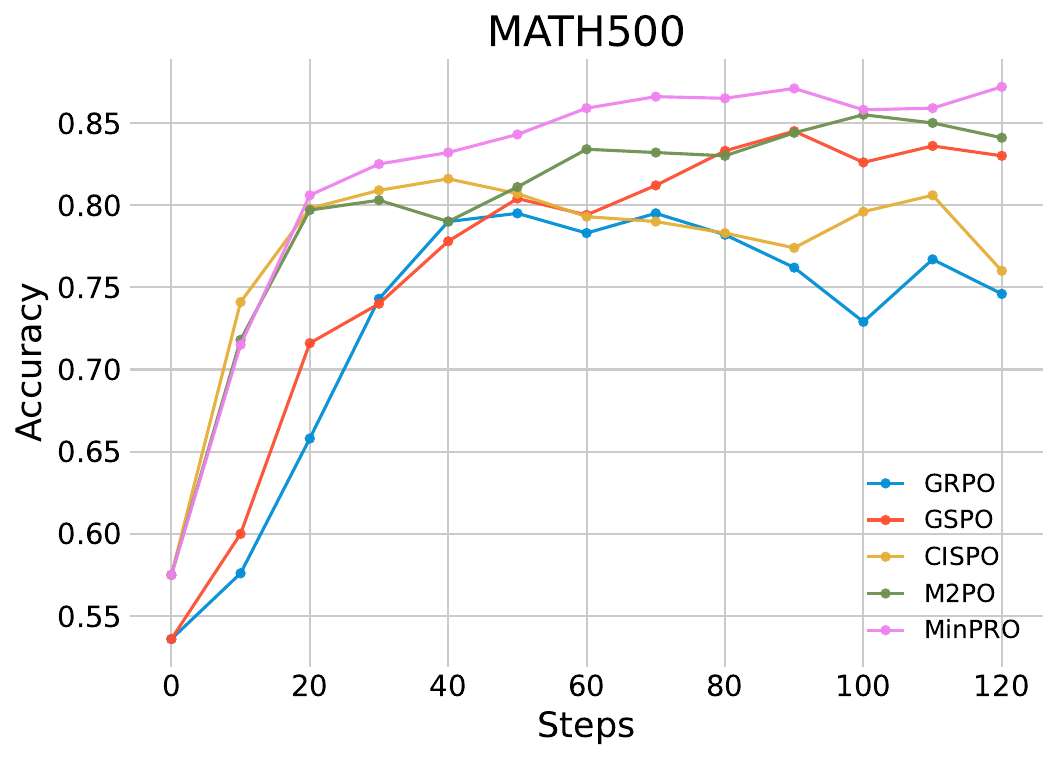}
\includegraphics[width=0.32\columnwidth]{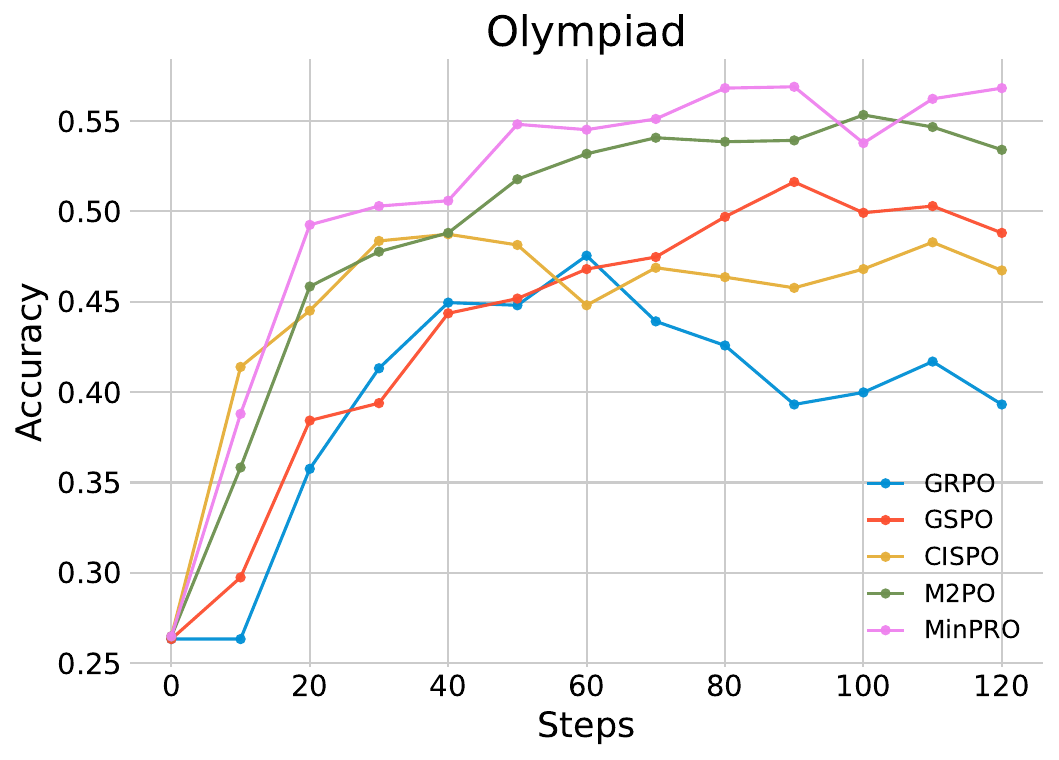}
\includegraphics[width=0.32\columnwidth]{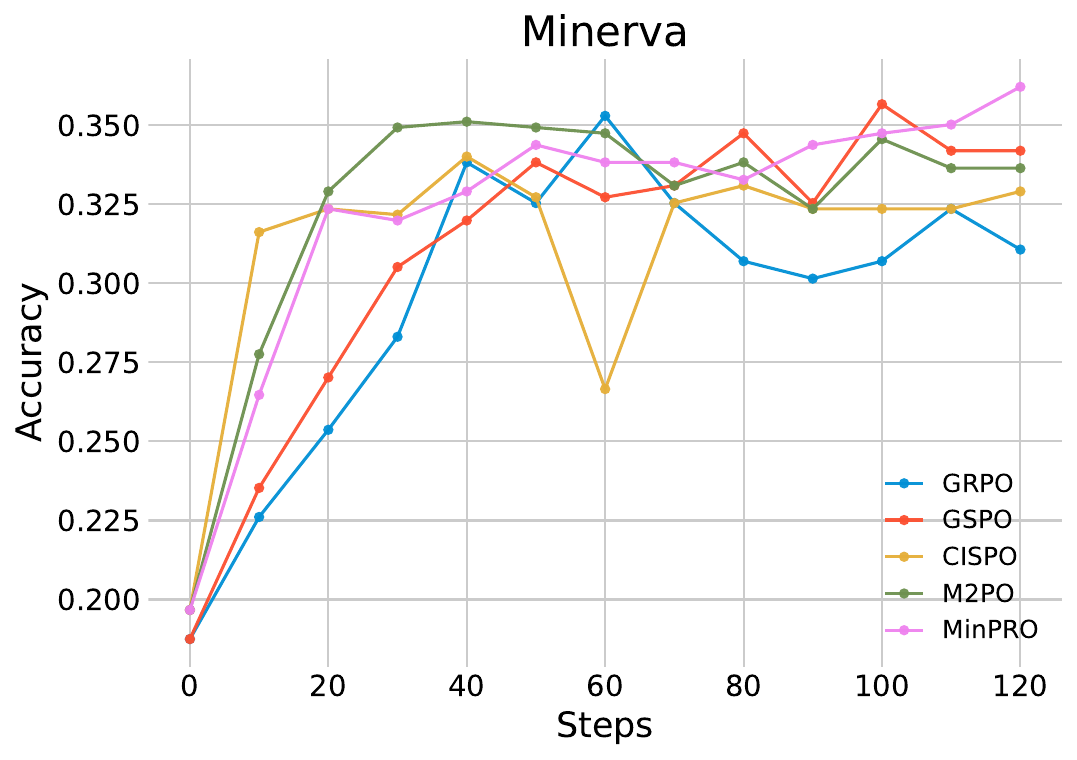}

\hfill

\includegraphics[width=0.32\columnwidth]{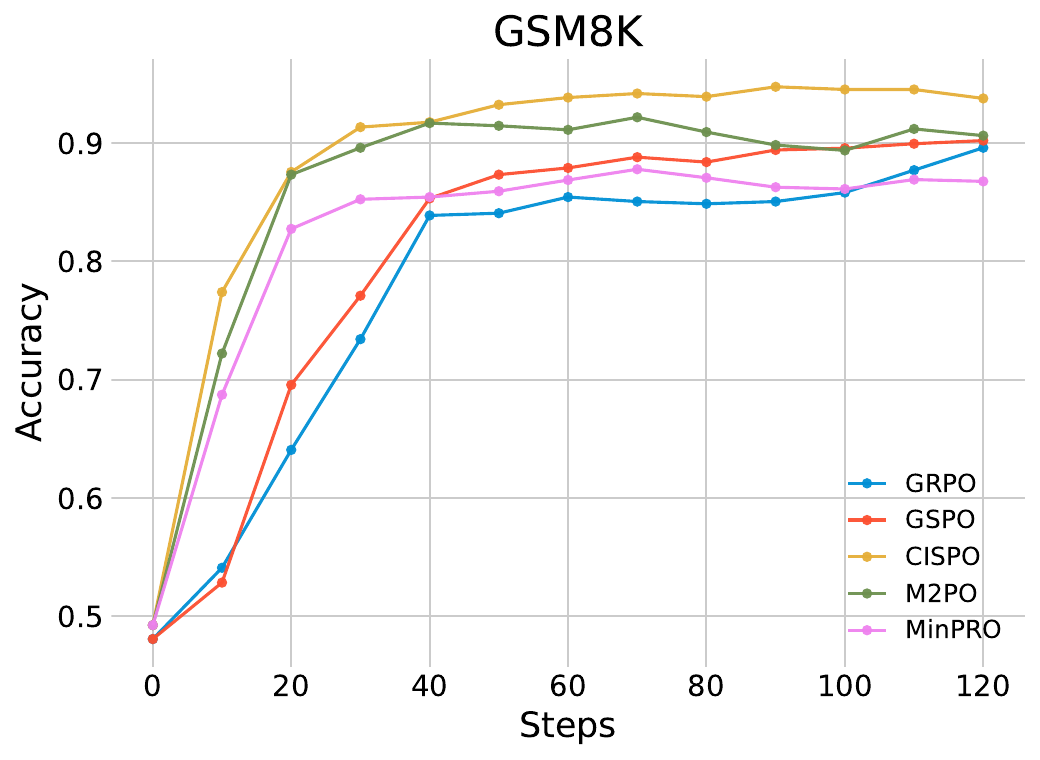}
\caption{Per-dataset \texttt{pass@1} scores with Qwen3-30B-A3B under off-policy training.}
\label{figure:30b pre-data scores}
\end{figure*}

\end{document}